\newcommand{\prob}[1]{\Pr\left(#1\right)}
\newcommand{\expect}[1]{\mathbf{E}\left[#1\right]}
\newcommand{\Real}{\mathbb{R}}
\newcommand{\Natural}{\mathbb{N}}
\newcommand{\sel}{\text{\tt select}\xspace} 
\newcommand{\mut}{\text{\tt mutate}\xspace}
\newcommand{\psel}{\ensuremath{p_\mathrm{sel}}\xspace}
\newcommand{\pmut}{\ensuremath{p_\mathrm{mut}}\xspace}
\DeclareMathOperator{\poly}{poly}
\newcommand{\bigO}[1]{\ensuremath{\mathcal{O}\left(#1\right)}}
\newcommand{\bigTheta}[1]{\mathord{\Theta}\mathord{\left(#1\right)}}
\newcommand{\ab}{\hspace{0.125em}}                        
\newcommand{\ie}{\hbox{i.\ab e.}\xspace}                  
\newcommand{\eg}{\hbox{e.\ab g.}\xspace}                  
\newcommand{\cf}{\hbox{c.\ab f.}\xspace}                  
\newcommand{\cwlog}{\hbox{w.\ab l.\ab o.\ab g.}\xspace}   
\newcommand{\onemax}{\text{\sc OneMax}\xspace} 
\newcommand{\linear}{\text{\sc Linear}\xspace} 
\newcommand{\decomp}{\text{\sc Decomp}\xspace} 
\newcommand{\royalroad}{\text{\sc Rr}\xspace} 
\newcommand{\genbeta}{\beta\xspace}
\begin{document}
\title{Runtime Analysis of 
      Fitness-Proportionate Selection on Linear
      Functions}


\author{
  Duc-Cuong Dang\inst{1},
  Anton Eremeev\inst{2,3},
  Per Kristian Lehre\inst{4}
  }
\authorrunning{D-C. Dang et al.}
\tocauthor{D-C. Dang, A. V. Eremeev, P. K. Lehre}

\institute{
  INESC TEC, Porto, Portugal
\and
  Institute of Scientific Information for Social Sciences RAS, Moscow, Russia\\
\and
  Sobolev Institute of Mathematics, Omsk, Russia\\
\and
  University of Birmingham, Birmingham, United Kingdom\\
\email{duc.c.dang@inesctec.pt, eremeev@ofim.oscsbras.ru,
p.k.lehre@cs.bham.ac.uk}
  }

\maketitle

\begin{abstract}

  This paper extends the runtime analysis of non-elitist evolutionary
  algorithms (EAs) with fitness-proportionate selection from the simple \onemax function
  to the linear functions. Not only does our analysis cover a larger class of fitness
  functions, it also holds for a wider range of mutation rates.
  We show that with overwhelmingly high probability, no linear
  function can be optimised in less than exponential time, assuming
  bitwise mutation rate $\Theta(1/n)$ and population size
  $\lambda=n^k$ for any constant $k> 2$.
  In contrast to this negative result, we also show that for any linear
  function with polynomially bounded weights, the EA achieves
  a polynomial expected runtime if the mutation rate is reduced
  to $\Theta(1/n^2)$ and the population size is sufficiently large.
  Furthermore, the EA
  with mutation rate $\chi/n=\Theta(1/n)$ and modest population
  size $\lambda=\Omega(\ln n)$ optimises the \emph{scaled} fitness function
  $e^{(\chi+\varepsilon)f(x)}$ for any linear function $f$ and
  any $\varepsilon>0$ in expected time $O(n\lambda\ln\lambda+n^2)$.
  These upper bounds also extend to some additively decomposed fitness functions, such as the Royal Road functions.
  We expect that the obtained results may be useful not only for the development of the theory of evolutionary algorithms, but
  also for biological applications, such as the directed evolution.

\keywords{ Evolutionary Algorithm \and
 Selection \and
 Runtime \and
 Approximation \and
 Royal Road Function \and Directed Evolution
 }
\end{abstract}

\section{Introduction}\label{sec:intro}

Realising the potential and usefulness of each operator that can constitute
randomised search heuristics (RSH) and their interplay is an important step
towards the efficient design of these algorithms for practical applications.
Theoretical studies, especially runtime analyses of RSH, have
rigorously and successfully contributed to such realisation.
Here and below, by the runtime, or expected optimization time, we mean the expected number of fitness (objective function)
evaluations made until an optimum is found for the first time.
Taking evolutionary algorithms~(EAs) as an example, the proofs
showing how and when the population size, recombination operators,
mixing mutation operators or self-adaptation techniques are
essential can be found in \cite{bib:Dang2017a,bib:Dang2017,bib:Dang2016a,bib:Jansen2001,bib:Lehre2013,bib:Witt2008}.
Moreover, from this type of
studies some better algorithms and operators can also be developed
\cite{bib:Doerr2015a,bib:Doerr2017} and some biologically
meaningful estimates may be obtained~\cite{ES_BGRS18}.

In this paper, we analyse the use of the
\emph{fitness-proportionate} selection in optimising linear
fitness functions and additively decomposed fitness functions. These fitness functions are
among the basic examples of objective functions in mathematical optimization.
Many models in theoretical biology are based on a weak epistasis assumption, \ie genes have approximately additive effect on the genotype fitness, which may be modelled by a linear fitness function.

The fitness-proportionate selection mechanism, also known as
\emph{roulette-wheel} selection, was the main selection used in
the early development of genetic algorithms~(GA) and their
applications \cite{bib:Goldberg1989}. Specifically, the chance of
selecting an individual~$x$ for reproduction is equal to the
fitness~$f(x)$ of $x$ divided by the total fitness of the
population. Thus unlike the rank-based selections (tournament
selection, $(\mu,\lambda)$-selection, ranking selection etc.),
this selection is sensitive to the absolute values of the fitness
function~$f(x)$, and a non-linear scaling of the function may
significantly change its properties. This is often seen as a
weakness from the theoretical view point, but at the same time
there exists a large body of literature reporting applications of
this mechanism in combinatorial optimization (see
e.g.~\cite{AOT97,BCh96,Marchiori1999}), where the proportionate
selection is not necessarily the best practically tested selection
mechanism but at least a competitive one.

Analysis of the proportionate selection is also valuable for the transfer of
methods from the area of evolutionary computation into the biology.
In population genetics, some models of population dynamics account for
fitness-proportionate effect of selection on the genotypes
frequencies, see e.g.~\cite{PAIXAO201528}.
The well-known biotechnological procedure SELEX (Systematic Evolution of Ligands by EXponential enrichment) can be treated as an {\em in vitro} implementation of an EA~\cite{ES_BGRS18}.
SELEX and its variants are valuable tools to identify DNA and RNA
sequences with high affinity for binding a pre-specified target proteins/molecules~\cite{Darmostuk15,Tizei16}.
Such procedures have numerous applications in clinical research, agriculture, metabolic engineering, etc.
The mathematical model of SELEX~\cite{ITG91} shows that the effect of selection in this procedure is the same as the average effect of
the fitness-proportionate selection on a specific population-dependent fitness function 
(see details in Section~\ref{sec:discus}).

The fitness-proportionate selection became popular in evolutionary computation with the seminal
book of Goldberg~\cite{bib:Goldberg1989} and the formalisation of
the so-called simple genetic algorithm~(SGA) for optimisation
problems on bit-strings. The SGA is a {\em non-elitist} EA, \ie
the populations through generations are non-overlapping,
thus the main force guiding the optimisation process is the
fitness-proportionate selection of parents.
Offspring individuals are varied through the recombination of parents,
and through the bitwise mutation operator. The probability $p_{\rm c}$
of applying the recombination (crossover), and the
probability $p_{\rm m}$ of mutation in each bit-position are
tunable parameters. The standard setting
for mutation
is $p_{\rm m}=1/n$. 
Here and below, $n$ is the length of the bit-string.


Interested in SGA without crossover,
Neumann~et~al.~\cite{bib:Neumann2009} concluded that this algorithm
with the standard mutation 
and a population size $\lambda < \log(n)/4$ is inefficient in
optimising any pseudo-Boolean function with a unique optimum.
A shorter proof for this result (but with $\lambda > n^3$) was proposed in~\cite{bib:Lehre2011}.
%

One of the well-known benchmark functions in the theory of evolutionary algorithms is \onemax which counts the number of $1$-bits of the input string $x$. Given appropriate scaling of fitness, the SGA without a crossover was shown to be capable of finding the optimum
of \onemax within expected polynomial time~\cite{bib:Lehre2011}.
A similar conclusion is
made  in~\cite{bib:ErYuJOR17} for the SGA with a constant crossover probability~$p_{\rm c}<1$
optimizing any pseudo-Boolean function
without local optima which are not globally optimal. On
\onemax,
an expected polynomial runtime of the
SGA without crossover was established in~\cite{bib:Dang2016},
assuming a reduction of the mutation
probability to~$1/(6n^2)$.
In~\cite{bib:DK_GECCO2019}, the polynomial runtime bound was significantly reduced.
The results from~\cite{bib:Dang2016}, \cite{bib:DK_GECCO2019} and \cite{bib:Lehre2011} make
use of the so-called \emph{level-based analysis}
technique for proving upper bounds on the expected optimisation time.
The class of linear pseudo-Boolean problems has played a central role
as theoretical benchmark in evolutionary computation, and to develop
suitable analytic techniques.
Droste, Jansen and Wegener \cite{droste_analysis_2002} and He and Yao
\cite{he_drift_2001,he_erratum_2002}
showed independently in 2002 that
the expected optimisation time of the (1+1) EA on linear functions of $n$ variables is
$\Theta(n\log n)$.
Using the drift analysis, Witt proved that the runtime of the (1+1)~EA on linear
functions is $en\ln n+ O(n)$ in expectation and with high probability
\cite{witt_tight_2013} (where $e$ is the basis of the natural logarithm). For mutation probability $p=c/n$ for a
constant $c>0$, the expected optimisation time is
$(1\pm o(1))\frac{e^c}{c}n\ln n$, which is minimised for $p=1/n$ (up
to lower-order terms). Furthermore, he proved that no mutation-based
EA has an expected optimisation time smaller than this (up to
lower-order terms).

Based on the gambler's ruin problem and drift analysis
\cite{bib:Hajek1982}, Happ~et~al.~\cite{bib:Happ2008} showed that
switching the \emph{plus} ``+'' selection for replacement in the
$($1+1$)$~EA and RLS~(Randomized Local Search) to fitness-proportionate selection makes the
algorithms highly inefficient in optimising linear functions.

Several results have also become available for population-based
evolutionary algorithms. Assuming appropriate selective pressure, the
expected optimisation time of the ($\mu$,$\lambda$) EA and many
similar non-elitist evolutionary and genetic algorithms is
$O(n^2+n\lambda\log\lambda)$ \cite{bib:Corus2017}. This bound on all
linear functions is significantly higher than the upper bound
$O(n\lambda)$ which holds for $\onemax$ when $\lambda=\Omega(\log n)$
\cite{bib:Corus2017}. The ($1$+$\lambda$) EA optimises linear
functions in $O(n\log n+n\lambda)$ function evaluations
\cite{doerr_optimizing_2015}. For the $\onemax$, the runtime bound is
$O(n\log n + \lambda n\log\log\lambda/\log\lambda)$.

  The contribution of this paper is twofold. 
  On the negative side, we show that no linear
  function can be optimised in less than exponential time, assuming
  bitwise mutation rate $\Theta(1/n)$ and population size
  $\lambda=n^k$ for any constant $k> 2$.
  On the positive side, we prove that for any linear
  function with polynomially bounded weights, the EA achieves
  a polynomial expected runtime if the mutation rate is reduced
  to $\Theta(1/n^2)$ and $\lambda$ is sufficiently large.
  Furthermore, the EA
  with mutation rate $\chi/n=\Theta(1/n)$ and population
  size $\lambda=\Omega(\ln n)$ optimises the \emph{scaled} fitness function
  $e^{(\chi+\varepsilon)f(x)}$ for any linear function $f$ and
  $\varepsilon>0$ in expected time $O(n\lambda\ln\lambda+n^2)$.

%
The remainder of the paper is
organised as follows. The considered algorithm, tools and proving
techniques are presented in the next section.
Section~\ref{sec:stdsel} presents the general negative result for
the standard setting of fitness-proportionate selection on linear
functions. This is followed by the presentation of different
modifications to the setting and the algorithm so as to make the
mechanisms efficient in Section~\ref{sec:scaling-mrate}. The
possibility to extend the obtained upper bounds to EAs with other
fitness functions, such as the Royal Road function, is considered in
Section~\ref{sec:low_mut_adf}. A discussion of the obtained results and
their potential applicability to biological evolution is given in Section~\ref{sec:discus}.
Conclusions are drawn in Section~\ref{sec:concl}.
Some proofs are excluded from the paper and provided in the appendix.

\section{Preliminaries}\label{sec:prelem}

%
For any $n \in \Natural$, define $[n]:=\{1,2,\dots,n\}$.
The natural logarithm and logarithm to the base $2$ are denoted by
$\ln(\cdot)$ and $\log(\cdot)$ respectively.
For $x \in \{0,1\}^n$, we write $x_i$ for the $i$-th bit value.
The Hamming distance is denoted by $H(\cdot,\cdot)$ and
the Iverson bracket by $[\cdot]$.
Throughout the paper the maximisation of a \emph{fitness function}
$f\colon \mathcal{X} \rightarrow \Real$ over a finite \emph{search
space} $\mathcal{X}$ is considered.
Given a partition of $\mathcal{X}$ into $m$ ordered
subsets/\emph{levels} $(A_1,\dots,A_m)$,
let $A_{\geq j} := \cup_{i=j}^{m}A_i$.
The partition is called \emph{$f$-based} if for all $x \in A_j$
and $y \in A_{j+1}$ it holds that $f(y)>f(x)$ for all $j \in [m-1]$.
A \emph{population} is a vector $P \in \mathcal{X}^{\lambda}$, where
the $i$-th element is called the $i$-th \emph{individual}.
For $A \subseteq \mathcal{X}$, define $|P \cap A|:=|\{i \mid P(i)
\in A\}|$, \ie the count of individuals of $P$ in $A$.
We are
interested in fitness functions on $\mathcal{X}
= \{0,1\}^n$, the so-called \emph{pseudo-Boolean functions},
and their class of linear functions:
$$
  \linear(x) :=\sum_{i=1}^n a_i x_i,
$$
where $a_i \neq 0$ for $i\in [n]$. Due to the symmetry of the mutation
operator that we use (see below), we can assume \cwlog throughout
the paper that the weights are positive and sorted in descending order,
\ie $a_1 \geq a_2 \geq \dots \geq a_n > 0$.

All algorithms considered in this paper fall into the framework of
Algorithm~\ref{algo:EA}. 
Starting with some $P_0$
which is sampled uniformly from $\mathcal{X}^\lambda$, in each
iteration $t$ of the outer loop, a new population $P_{t+1}$ is
generated by independently sampling $\lambda$ individuals from the
existing population $P_t$ using two
operators:
\emph{selection} $\sel\colon \mathcal{X}^\lambda \rightarrow [\lambda]$
and \emph{mutation} $\mut\colon \mathcal{X} \rightarrow \mathcal{X}$.
Here, $\sel$ takes a vector of $\lambda$ individuals as input, then
implicitly makes use of the function $f$, \ie through \emph{fitness
evaluations}, to return the index of the individual to be selected.

\begin{algorithm}
  \caption{Non-Elitist Evolutionary Algorithm \cite{bib:Dang2016,bib:Lehre2011}}\label{algo:EA}
  \begin{algorithmic}[1]
    \REQUIRE ~\\
             Finite state space $\mathcal{X}$, and initial population $P_0 \in \mathcal{X}^\lambda$
    \FOR{$t=0,1,2,\dots$ until termination condition met}
        \FOR{$i=0,1,2,\dots,\lambda$ }
           \STATE Sample $I_t(i):=\sel(P_t)$, and set $x := P_t(I_t(i))$
           \STATE Sample $P_{t+1}(i) := \mut(x)$
        \ENDFOR
    \ENDFOR
  \end{algorithmic}
\end{algorithm}

The function is optimised when an optimum $x^*$, \ie $f(x^*) =
\max_{x \in \mathcal{X}}\{f(x)\}$, appears in $P_t$ for the first
time, \ie $x^*$ is sampled by $\mut$, and the optimisation time
(or runtime) is the number of fitness evaluations made until that
time.

Formally, $\sel$ is represented by a probability distribution over
$[\lambda]$, and we use $\psel(i \mid P)$ to denote the
probability of selecting the $i$-th individual $P(i)$ of $P$. The
\emph{fitness-proportionate selection} is an implementation of
$\sel$ with
$$
  \forall P \in \mathcal{X}^{\lambda}, \forall i \in [\lambda]\colon
    \psel(i \mid P) = \frac{f(P(i))}{\sum_{j=1}^{\lambda} f(P(j))}.
$$
%
We say that $\sel$ is \emph{$f$-monotone} if for all $P \in
\mathcal{X}^\lambda$ and all $i,j \in [\lambda]$ it holds that
$\psel(i \mid P) \geq \psel(j \mid P) \Leftrightarrow f(P(i)) \geq
f(P(j))$. It is easy to see that the fitness-proportionate
selection is $f$-monotone.

We are interested in the following two characteristics of
selection.
The \emph{cumulative selection probability} $\beta$ of $\sel(P)$
for any $\gamma \in (0,1]$ is
$$
  \beta(\gamma,P)
    := \sum_{i=1}^{\lambda} \psel(i\mid P) \cdot \left[ f(P(i)) \geq f_{\lceil \gamma\lambda \rceil}
    \right],\ \text{where} \ P \in \mathcal{X}^{\lambda},
$$
assuming a sorting $(f_1,\cdots,f_\lambda)$ of the fitnesses of $P$
in descending order. In essence, $\beta(\gamma, P)$ is the
probability of selecting an individual at least as good as the
$\lceil \gamma\lambda\rceil$-ranked individual of $P$,
When sampling $\lambda$ times with $\sel(P_t)$ and recording the outcomes as
vector $I_t \in [\lambda]^\lambda$, the \emph{reproductive rate} of $P_t(i)$ is
$$
  \alpha_t(i): = \expect{R_t(i) \mid P_t}
    \text{ where } R_t(i) := \sum_{j=1}^{\lambda}[I_t(j)=i].
$$
Thus $\alpha_t(i)$ is the expected number of times that
$P(i)$ is selected. The reproductive rate $\alpha_0$ of
Algorithm~\ref{algo:EA} 
is defined as $\alpha_0 := \sup_{t\geq 0} \max_{i\in[\lambda]}\{
\alpha_t(i) \}$.

The mutation operator $\mut$ is represented by a transition matrix
$\pmut\colon \mathcal{X}\times \mathcal{X} \rightarrow [0,1]$, and
we use $\pmut(y \mid x)$ to denote the probability to mutate an
individual $x$ into $y$. On $\mathcal{X} = \{0,1\}^n$, the
\emph{bitwise mutation} with mutation rate (probability) $\chi/n$
is an implementation of $\mut$ that satisfies
$$
  \forall x, y \in \{0,1\}^n\colon
  \pmut(y \mid x)
    = \left(\frac{\chi}{n}\right)^{H(x,y)}\left(1 - \frac{\chi}{n}\right)^{n - H(x,y)}.
$$
Note that the bitwise mutation treats the bit values $0$ and $1$
indifferently, and so for the bit positions. This allows the earlier
mentioned assumption on the positiveness and on the sorting of the
weights for $\linear$ functions.

To bound the expected optimisation time of Algorithm~\ref{algo:EA}
from above, we will use the \emph{level-based analysis}~\cite{bib:Corus2017}. The following theorem is taken from
Corollary~7 in \cite{bib:Corus2017} and tailored to the setting of
an $f$-based partition and $p_\mathrm{c}=0$. 
Thus it fits Algorithm~\ref{algo:EA}, and
is an improvement to Theorem~8 of~\cite{bib:Dang2016}.
\begin{theorem}\label{thm:level-based-theorem}
Given an $f$-based partition $(A_1,\dots,$ $A_{m})$ of $\mathcal{X}$, let
$P_t\in\mathcal{X}^\lambda$ be the population of
Algorithm~\ref{algo:EA} in generation $t,$
$t\in\mathbb{N},$ and define $T:=\min\{t\lambda \mid |P_t \cap A_{m}| > 0\}$.
If 
there exist 
  $s_1,\dots,s_{m-1},p_0,\delta \in(0,1]$,
$\gamma_0 \in (0,1)$ such that
%
  \begin{description}[noitemsep]
  \item[(M1)] $\forall P\in\mathcal{X}^\lambda, \forall j\in [m-1] \colon$
    $\displaystyle
       \pmut\left( y\in A_{\geq j+1} \mid x \in A_j \right)\geq s_j,$
  \item[(M2)] $\forall P\in\mathcal{X}^\lambda, \forall j\in [m-1] \colon$
    $\displaystyle
     \pmut\left( y\in A_{\geq j} \mid x\in A_j \right)\geq p_0,$
  \item[(M3)] $\forall P \in \left(\mathcal{X}\setminus A_{m}\right)^\lambda,
               \forall \gamma\in(0,\gamma_0]\colon $
  $\displaystyle
               \beta(\gamma, P) \geq (1+\delta)\gamma/p_0,$ 
  \item[(M4)] population size 
  $\displaystyle \lambda \geq
               \frac{4}{\gamma_0\delta^2} \ln\left(\frac{128 m}{\gamma_0s_*\delta^2}\right),
               \text{ where } s_*:=\min_{j\in[m-1]} \{s_j\},
  $
  \end{description}
  then
  $
  \expect{T}
   <
    \left(\frac{8}{\delta^{2}}\right)
    \sum_{j=1}^{m-1}\left(\lambda
    \ln\left(\frac{6\delta\lambda}{4+\gamma_0
    s_j\delta\lambda}\right)+\frac{1}{\gamma_0 s_j}\right).
  $
\end{theorem}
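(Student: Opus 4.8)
The plan is to track, for each level $j$, the count $X_t^{(j)} := |P_t \cap A_{\geq j}|$ of individuals at level $j$ or higher, using the crucial structural fact that, conditioned on $P_t$, the $\lambda$ offspring forming $P_{t+1}$ are produced independently and identically. Thus $X_{t+1}^{(j)}$ is distributed as $\bin(\lambda, p_{t,j})$, where $p_{t,j}$ is the probability that one round of selection followed by mutation yields an individual in $A_{\geq j}$, and the entire argument reduces to lower-bounding $p_{t,j}$ from the hypotheses. First I would observe that, since the partition is $f$-based, whenever $X_t^{(j)} \geq \gamma\lambda$ the top $\lceil\gamma\lambda\rceil$ individuals by fitness all lie in $A_{\geq j}$, so a parent in $A_{\geq j}$ is selected with probability at least $\beta(\gamma,P_t)$. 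For $\gamma \leq \gamma_0$, combining (M3) with the survival probability (M2) then gives $p_{t,j} \geq \beta(\gamma,P_t)\,p_0 \geq (1+\delta)\gamma$, i.e. $\expect{X_{t+1}^{(j)} \mid P_t} \geq (1+\delta)X_t^{(j)}$. Separately, once $X_t^{(j)} \geq \gamma_0\lambda$ while $X_t^{(j+1)}=0$ (so all these individuals lie in $A_j$), selecting such a parent and jumping up via (M1) shows that one offspring lands in $A_{\geq j+1}$ with probability at least $\gamma_0 s_j$ up to constants.

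These two estimates drive a two-phase analysis of each transition from level $j$ to level $j+1$. In the \emph{discovery} phase ($X_t^{(j)} \geq \gamma_0\lambda$, $X_t^{(j+1)}=0$), a generation places at least one individual into $A_{\geq j+1}$ with probability $\approx 1-(1-\gamma_0 s_j)^\lambda$, so the phase costs $O(1/(\gamma_0 s_j))$ fitness evaluations, which is the $1/(\gamma_0 s_j)$ summand. In the \emph{consolidation} phase the count $X_t^{(j+1)}$ has to climb from $1$ to $\gamma_0\lambda$; feeding the multiplicative bound $\expect{X_{t+1}^{(j+1)}\mid P_t}\geq (1+\delta)X_t^{(j+1)}$ together with the binomial concentration into a multiplicative-drift theorem for the random process yields a cost of order $(\lambda/\delta^2)\ln(\gamma_0\delta\lambda)$ evaluations, matching the $\lambda\ln(6\delta\lambda/(4+\gamma_0 s_j\delta\lambda))$ summand (the drift theorem is what produces the factor $\delta^{-2}$ rather than $\delta^{-1}$). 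Summing the two phase costs over the $m-1$ transitions and absorbing constants into the prefactor $8/\delta^2$ gives the stated bound on $\expect{T}$.

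The delicate point — and the reason (M4) is imposed — is that $X_t^{(j)}$ genuinely fluctuates, so the ``frontier'' level can in principle decrease and the two phases cannot be concatenated as if the process were monotone. To handle this rigorously I would define a single real-valued potential $g(P_t)$ equal to the frontier index plus a capped, normalised contribution from $X_t^{(j+1)}$, and verify the hypotheses of one drift theorem for $g$ over the whole run rather than arguing phase by phase. The heart of the difficulty is a negative-drift/Chernoff argument showing that once $X_t^{(j)}$ exceeds $\gamma_0\lambda$ it essentially never returns below: at the threshold the mean $(1+\delta)\gamma_0\lambda$ exceeds $\gamma_0\lambda$ by $\delta\gamma_0\lambda$, which dwarfs the $O(\sqrt{\gamma_0\lambda})$ standard deviation of $\bin(\lambda,p_{t,j})$, so the failure probability is $\exp(-\Omega(\gamma_0\delta^2\lambda))$; condition (M4), namely $\lambda = \Omega(\gamma_0^{-1}\delta^{-2}\ln(m/(\gamma_0 s_*\delta^2)))$, makes this small enough to union-bound over all levels and all relevant generations. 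Establishing this concentration-plus-drift statement and checking that the error terms it introduces are dominated by the main estimate is where essentially all the technical work lies; the remainder is the routine drift computation and the summation over levels.
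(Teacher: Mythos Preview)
The paper does not actually prove this theorem: it is quoted as Corollary~7 of \cite{bib:Corus2017}, specialised to $f$-based partitions and $p_{\rm c}=0$, and used as a black box. The only related argument the paper supplies is the appendix proof of Theorem~\ref{thm:level-based-theorem1}, and that proof is itself merely a reduction --- it checks that (M1)--(M3) together with (M4') imply conditions (G1)--(G3) of Theorem~3.2 in \cite{bib:DK_GECCO2019} (with $z_j=\gamma_0 s_j$), and then invokes that external result.

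Your proposal, by contrast, sketches the internal machinery of the level-based theorem from scratch: the binomial law of the offspring counts $X_{t+1}^{(j)}$, the discovery/consolidation decomposition, the single potential combining the frontier index with a capped contribution from $X_t^{(j+1)}$, and the Chernoff argument that explains the form of (M4). This is a faithful high-level outline of how the theorem is proved in \cite{bib:Corus2017}, so your reasoning is essentially sound; but it is a genuinely different and far more ambitious route than anything in the present paper. If your aim is to match the paper, the expected argument is the short one: observe that (M1), (M2), (M3) translate directly into the ``upgrade'', ``survive'', and ``select'' conditions of the general level-based theorem --- exactly the two-line computation the appendix carries out for Theorem~\ref{thm:level-based-theorem1}, namely $\beta(\gamma,P)p_0\geq(1+\delta)\gamma$ for (G2) and $\beta(\gamma_0,P)\min\{s_j,p_0\}\geq\beta(\gamma_0,P)s_jp_0>\gamma_0 s_j$ for (G1) --- and then cite \cite{bib:Corus2017} for the conclusion.
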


As an alternative to Theorem~\ref{thm:level-based-theorem} we use the
new level-based theorem based on the {\em multiplicative up-drift}~\cite{bib:DK_GECCO2019}.
Theorem~3.2 from~\cite{bib:DK_GECCO2019} implies the following:
\begin{theorem}\label{thm:level-based-theorem1}
Given an $f$-based partition $(A_1,\dots,$ $A_{m})$ of $\mathcal{X}$,
define ${T:=\min\{t\lambda \mid |P_t \cap A_{m}| > 0\}}$ where for all
$t\in\mathbb{N}$, $P_t\in\mathcal{X}^\lambda$ is the population of
Algorithm~\ref{algo:EA}.
If 
there exist 
  $s_1,\dots,s_{m-1},p_0,\delta \in(0,1]$,
%
$\gamma_0 \in (0,1)$, such that conditions (M1)--(M3) of Theorem~\ref{thm:level-based-theorem} hold and
%
  \begin{description}[noitemsep]
  \item[(M4')] for some constant $C>0$, the population size $\lambda$ satisfies
  $$
  \lambda \geq
               \frac{8}{\gamma_0\delta^2} \log\left(\frac{C m}{\delta} \left(\log \lambda +\frac{1}{\gamma_0 s_* \lambda}\right) \right),
               \text{ where } s_*:=\min_{j\in[m-1]} \{s_j\},
  $$
  \end{description}
  then
  $
  \expect{T}
   =\mathcal{O}
    \left(\frac{m\lambda \log(\gamma_0 \lambda)}{\delta} +
    \frac{1}{\delta}
    \sum_{j=1}^{m-1}\frac{1}{\gamma_0 s_j}\right).
  $
\end{theorem}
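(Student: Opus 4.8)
The plan is to obtain Theorem~\ref{thm:level-based-theorem1} as a direct specialisation of the multiplicative up-drift level-based theorem (Theorem~3.2 of \cite{bib:DK_GECCO2019}), so that the proof reduces to a careful matching of hypotheses followed by a translation of the resulting bound into the notation of the present paper. First I would recall the precise statement of Theorem~3.2, which is phrased for the same reproduction scheme --- independent sampling of $\lambda$ offspring, each produced by applying a variation operator to a selected parent --- and in terms of per-level transition probabilities together with a selective-pressure parameter and a population-size requirement. Since conditions (M1)--(M3) imported here are exactly the standard level-based conditions, the bulk of the reduction is to verify that, under the fitness-proportionate-selection reproduction of Algorithm~\ref{algo:EA}, these conditions instantiate the hypotheses of the cited theorem.

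The key conceptual step is the translation of the selective pressure into a genuine \emph{multiplicative up-drift}. Writing $X_t^{(j)}:=|P_t\cap A_{\geq j}|$ for the number of individuals at level $\geq j$, conditions (M2) and (M3) together give the following: whenever a $\gamma$-fraction (for $\gamma\le\gamma_0$) of the population lies at level $\geq j$, the expected number of offspring that again lie at level $\geq j$ is at least $\beta(\gamma,P_t)\,p_0\,\lambda\ge(1+\delta)\gamma\lambda$. Thus $X_t^{(j)}$ has expected multiplicative growth factor $1+\delta$ until it reaches $\gamma_0\lambda$, which is precisely the drift assumption of Theorem~3.2. Here I would use the $f$-monotonicity of fitness-proportionate selection noted above to guarantee that $\beta(\gamma,P_t)$ is evaluated against the correct ranked-fitness threshold, so that (M3) indeed yields the required per-generation up-drift for the indicator variable counting level-$\geq j$ individuals.

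Next I would match the two remaining ingredients. The population-size condition (M4') is to be shown to imply the large-population hypothesis of Theorem~3.2 for a suitable absolute constant $C$; this hypothesis prevents the up-drift from dying out through stochastic fluctuations while $X_t^{(j)}$ is still small, and it is where the $\log\lambda$ and $1/(\gamma_0 s_*\lambda)$ terms in (M4') originate. Finally, the runtime estimate of Theorem~3.2 decomposes, for each of the $m-1$ level transitions, into a \emph{filling} contribution of order $\lambda\log(\gamma_0\lambda)/\delta$ fitness evaluations --- the cost of growing $X_t^{(j)}$ from a single individual to a $\gamma_0$-fraction through roughly $\log(\gamma_0\lambda)/\delta$ multiplicative-drift steps --- and a \emph{breakthrough} contribution of order $1/(\gamma_0 s_j)$ fitness evaluations, the expected cost of placing the first individual on level $\geq j+1$ once a $\gamma_0$-fraction occupies level $\geq j$. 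Summing over $j$ yields the claimed bound $\mathcal{O}\!\left(m\lambda\log(\gamma_0\lambda)/\delta+\delta^{-1}\sum_{j=1}^{m-1}1/(\gamma_0 s_j)\right)$.

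The step I expect to be the main obstacle is the faithful reconciliation of parametrisations between the two papers: Theorem~3.2 is stated in its own notation, and I would have to confirm that its selective-pressure and population-size quantities correspond, without loss of constants that would affect (M4'), to $\beta$, $p_0$, $\delta$, $\gamma_0$ and the $s_j$ as defined here, and that the select-then-mutate offspring model coincides exactly. In particular, the delicate point is the behaviour of the multiplicative up-drift near the absorbing boundary $X_t^{(j)}=0$, where the variance is large relative to the mean; it is exactly this regime that forces condition (M4'), and verifying that (M4') is strong enough to invoke Theorem~3.2 in the present setting is the crux of the argument.
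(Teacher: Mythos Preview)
Your approach is essentially the paper's: verify that (M1)--(M3) yield the per-level conditions (G1)--(G2) of Theorem~3.2 in \cite{bib:DK_GECCO2019}, that (M4') yields its population-size condition (G3) with $z_j=\gamma_0 s_j$, and then read off the bound. One correction: Theorem~\ref{thm:level-based-theorem1} is stated for Algorithm~\ref{algo:EA} with an \emph{arbitrary} selection mechanism --- conditions (M1)--(M3) are hypotheses, not things to be derived --- so your appeals to fitness-proportionate selection and $f$-monotonicity are unnecessary and should be dropped; the reduction is purely abstract, exactly as in the derivation of Corollary~7 in \cite{bib:Corus2017}.
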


Theorem~\ref{thm:level-based-theorem1} improves on Theorem~\ref{thm:level-based-theorem} in terms of dependence on $\delta$,
but only gives an asymptotical bound.
Its proof outline is analogous to that of Theorem~\ref{thm:level-based-theorem}.

Our lower bound is based on the \emph{negative drift theorem
  for populations}~\cite{bib:Lehre2010}.
%
\begin{theorem}\label{thm:negative-drift-pop}\label{th:negative_drift}
Consider Algorithm~\ref{algo:EA} on $\mathcal{X} = \{0,1\}^n$ with
bitwise mutation rate $\chi/n$ and population size $\lambda =
\poly(n)$, let $a(n)$ and $b(n)$ be positive integers such that
$b(n)\leq n/\chi$ and $d(n) = b(n) - a(n) = \omega(\ln n)$. Given
$x^* \in \{0,1\}^n$, define $T(n) := \min\{t \mid |P_t \cap \{x
\in \mathcal{X} \mid H(x,x^*) \leq a(n)\}| > 0\}$. If there exist
constants $\alpha>1$, $\delta>0$ such that
  \begin{description}[noitemsep]
  \item[(1)] $\forall t\geq 0$, $\forall i \in [\lambda]\colon$
               if $a(n) < H(P_t(i),x^*) < b(n)$ then $\alpha_t(i) \leq \alpha$,
  \item[(2)] $\displaystyle
               \psi := \ln(\alpha)/\chi + \delta < 1$,
  \item[(3)] $\displaystyle
               b(n)/n < \min\left\{1/5,
                                           1/2 - \sqrt{\psi(2-\psi)/4}\right\}$,
  \end{description}
  then $\prob{T(n)\leq e^{cd(n)}} = e^{-\Omega(d(n))}$ for some constant $c>0$.
\end{theorem}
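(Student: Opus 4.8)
\emph{Proof proposal.} The plan is to project the population onto the Hamming distance to $x^*$ and to track a single real-valued potential that grows geometrically as individuals approach $x^*$, so that entering the target region forces the potential to be exponentially large, while the hypotheses force the potential to drift downward. Concretely, for a base $r>1$ fixed later and writing $(z)^+:=\max\{0,z\}$, I would set
\begin{equation*}
  Y_t := \sum_{i=1}^{\lambda} r^{\left(b(n)-H(P_t(i),x^*)\right)^+}.
\end{equation*}
Every individual beyond the gap, i.e.\ with $H(P_t(i),x^*)\geq b(n)$, contributes exactly $1$, so with high probability $Y_0=\lambda$: since $P_0$ is uniform and $b(n)/n<1/5<1/2$, a Chernoff bound gives every initial individual distance $>b(n)$ except with probability $\lambda e^{-\Omega(n)}$. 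Conversely, the event $T(n)\le t$ forces some individual to distance $\le a(n)$, whence $Y_t\ge r^{d(n)}$. It therefore suffices to show that $Y_t$ is unlikely to reach $r^{d(n)}$ within $e^{c\,d(n)}$ steps.

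First I would compute the one-step multiplicative drift of a single offspring. By symmetry of bitwise mutation, an individual at distance $h$ mutates to distance $H'$ with
\begin{equation*}
  \frac{\expect{r^{\,b(n)-H'}}}{r^{\,b(n)-h}}
    = \left(1+(r-1)\tfrac{\chi}{n}\right)^{h}\left(1-\tfrac{r-1}{r}\cdot\tfrac{\chi}{n}\right)^{n-h},
\end{equation*}
which is increasing in $h$ and hence maximal at the far edge $h=b(n)$ of the gap. Using $\alpha_t(i)=\lambda\,\psel(i\mid P_t)$ and condition~(1) to bound the reproductive rate of gap individuals by $\alpha$, their aggregate contribution to $\expect{Y_{t+1}\mid P_t}$ is at most
\begin{equation*}
  \alpha\exp\!\left(\chi\left[(r-1)\tfrac{b(n)}{n}-\tfrac{r-1}{r}\Big(1-\tfrac{b(n)}{n}\Big)\right]\right)
\end{equation*}
times their current contribution. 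Writing $u:=b(n)/n$, the bracket is minimised by $r=\sqrt{(1-u)/u}>1$, giving exponent $\chi(2\sqrt{u(1-u)}-1)$. Since $\ln\alpha/\chi=\psi-\delta$ and condition~(3) ensures $u(1-u)<(1-\psi)^2/4$ (the threshold $\tfrac12-\tfrac12\sqrt{\psi(2-\psi)}$ is precisely where $u(1-u)=(1-\psi)^2/4$, and $u\mapsto u(1-u)$ is increasing on $(0,\tfrac12)$), the above factor is bounded by $e^{-\chi\delta}<1$; selection cannot overcome the mutation drift inside the gap. When $u\to 0$ one caps $r$ at a large constant, keeping $r$ in a bounded subinterval of $(1,\infty)$ and the contraction below a constant $\rho_0<1$.

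Next I would control the individuals whose reproductive rate is \emph{not} bounded, namely those at distance $\ge b(n)$, together with the truncation error at the far edge. The key point is that such an uncontrolled individual can raise the potential only by entering the gap, which requires gaining matching bits; using $b(n)\le n/\chi$ and $u<1/5$ to bound the mutation increments, the net inflow at the bottom contributes only a bounded additive term, so that $\expect{Y_{t+1}\mid P_t}\le \rho_0\,Y_t + c_1$ with $\rho_0<1$ and $c_1=\poly(n)$ holds for every $t$ before the target region is reached. Finally, since the number of bits flipped by one mutation is $\bin(n,\chi/n)$, the multiplicative one-step change of $Y_t$ has a uniformly bounded exponential moment, so I would invoke a large-deviation/drift theorem for processes with negative multiplicative drift and bounded-moment increments (Hajek's conditions~\cite{bib:Hajek1982}, the engine behind the negative drift theorem). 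With $Y_0=\poly(n)$ whp and the threshold $r^{d(n)}=e^{\Omega(d(n))}$ dominating every polynomial because $d(n)=\omega(\ln n)$, this yields $\prob{T(n)\le e^{c\,d(n)}}=e^{-\Omega(d(n))}$; the initial bad event of probability $\lambda e^{-\Omega(n)}$ is absorbed since $d(n)\le n$.

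\textbf{Main obstacle.} The delicate step is the third one: the reproductive-rate hypothesis only constrains individuals \emph{inside} the gap, so the contribution of far individuals and the far-edge truncation must be shown to add at most a bounded term rather than a term proportional to $Y_t$; obtaining honest bounds on the mutation increments — so that the exponential-tail drift theorem applies — is exactly where the assumptions $b(n)\le n/\chi$ and $b(n)/n<1/5$ are used. Fixing the base $r$ to witness condition~(3) while keeping it in a bounded range, and tracking the slack $\delta$ cleanly through the geometric potential, is the other point requiring care; the overall scheme mirrors the argument of~\cite{bib:Lehre2010}.
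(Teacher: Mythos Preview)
The paper does not prove this theorem: it is stated in the preliminaries as a tool and attributed to~\cite{bib:Lehre2010} (``Our lower bound is based on the \emph{negative drift theorem for populations}~\cite{bib:Lehre2010}''). There is therefore no in-paper proof to compare your proposal against.

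That said, your sketch is essentially the argument of~\cite{bib:Lehre2010}: the exponential potential on Hamming distance, the optimisation of the base $r$ via $r=\sqrt{(1-u)/u}$ to make the mutation moment-generating factor equal $\exp(\chi(2\sqrt{u(1-u)}-1))$, the identification of $u=\tfrac12-\tfrac12\sqrt{\psi(2-\psi)}$ as the threshold where $2\sqrt{u(1-u)}=1-\psi$, and the appeal to Hajek-style negative drift are all the right ingredients, and your contraction bound $e^{-\chi\delta}$ is exactly what the slack $\delta$ in condition~(2) is there to produce. You have also correctly located the genuine technical nuisance: individuals at distance $\ge b(n)$ are not covered by condition~(1), so their total reproductive rate is only bounded by $\lambda$, and one must argue that their contribution to $Y_{t+1}$ is an additive $\poly(n)$ term (each such individual contributes $1$ now and its offspring contributes at most a bounded constant in expectation, using $b(n)/n<1/5$ and $b(n)\le n/\chi$ to control the binomial tail of bit-flips crossing into the gap). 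With $d(n)=\omega(\ln n)$ that additive polynomial is swallowed by the gap $r^{d(n)}$, and the exponential-moment condition for Hajek follows from the binomial number of flipped bits. So as a proof plan your proposal is sound and faithful to the original; in this paper, however, nothing beyond the citation is expected.
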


%

\section{Fitness-Proportionate Selection with Standard Mutation Rates is Inefficient}\label{sec:stdsel}
%
In this section, we consider Algorithm~\ref{algo:EA} with
fitness-proportionate selection and standard bitwise mutation
given a constant value of the parameter~$\chi>\ln 2$. This
algorithm turns out to be inefficient on the whole class of linear
fitness functions. For the proof we will use the same approach as
suggested for lower bounding the EA runtime on the \onemax fitness
function in~\cite{bib:Lehre2011}. In order to obtain an upper
bound on the reproductive rate,
we first show that, roughly speaking, it is
unlikely that the average fitness of the EA population becomes
less than half the optimal sometime during an exponential number
of iterations.

\begin{lemma}\label{lemma:antifitness_sum}
  Let $\varepsilon>0$ and $\delta>0$ be constants and let $f(x)= \linear(x)$ with $f^*:=\sum_{i=1}^n a_i$.
  Define $T$ to be the smallest
  $t$ such that Algorithm~\ref{algo:EA} using an $f$-monotone selection mechanism,
  bitwise mutation with $\chi=\Omega(1),$
  and population size $\lambda\geq n^{2+\delta},$ has a population
  $P_t$ where
  $
  \sum_{j=1}^\lambda f(P_t(j))\leq \lambda (f^*/2)(1-\varepsilon).
  $
  Then there exists a constant $c>0$ such that ${\prob{T\leq e^{cn}} = e^{-\Omega(n^{\delta})}.}$
\end{lemma}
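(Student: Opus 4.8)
The plan is to pass to the complementary ``anti-fitness'' $g(x):=f^*-f(x)=\sum_{i=1}^n a_i(1-x_i)$ and to track its population total $G_t:=\sum_{j=1}^{\lambda} g(P_t(j))$. Because $g(P_t(j))=f^*-f(P_t(j))$, the stopping condition $\sum_j f(P_t(j))\le \lambda(f^*/2)(1-\varepsilon)$ is exactly $G_t\ge \lambda(f^*/2)(1+\varepsilon)=:b$, while the ``equilibrium'' value is $G^*:=\lambda f^*/2$. It therefore suffices to show that, starting from the uniform initial population, $G_t$ does not reach $b$ within $e^{cn}$ generations except with probability $e^{-\Omega(n^{\delta})}$. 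I would establish this by exhibiting a negative drift of $G_t$ towards $G^*$ together with sharp one-step concentration, and then invoking a negative drift theorem over the exponential horizon.

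For the drift I would treat mutation and selection separately. A bit-by-bit computation for bitwise mutation with rate $\chi/n$ gives
\[
  \expect{g(\mut(x))\mid x}=\left(1-\tfrac{2\chi}{n}\right)g(x)+\tfrac{\chi}{n}f^*,
\]
so mutation contracts a single individual's anti-fitness towards its fixed point $f^*/2$ by the factor $1-2\chi/n$ (positive as $\chi=\bigTheta{1}$). For selection I would use that an $f$-monotone mechanism assigns a larger selection probability to an individual of smaller anti-fitness; the selection probabilities and the anti-fitness values are then oppositely ordered, so Chebyshev's sum (correlation) inequality gives $\sum_{i}\psel(i\mid P_t)\,g(P_t(i))\le \tfrac1\lambda\sum_i g(P_t(i))=G_t/\lambda$, i.e. the expected anti-fitness of a selected individual never exceeds the population average. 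Since the $\lambda$ offspring are produced independently given $P_t$, combining the two estimates yields
\[
  \expect{G_{t+1}-G^*\mid P_t}\le \left(1-\tfrac{2\chi}{n}\right)(G_t-G^*),
\]
a negative drift whenever $G_t>G^*$; on the interval $[a,b]$ with $a:=\lambda(f^*/2)(1+\varepsilon/2)$ this drift is at most $-\Omega(\lambda f^*\chi\varepsilon/n)$.

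For the concentration I would exploit that, conditionally on $P_t$, the variable $G_{t+1}=\sum_j g(P_{t+1}(j))$ is a sum of $\lambda$ i.i.d. terms each lying in $[0,f^*]$. Hoeffding's lemma then both furnishes the exponential-moment bound on the increments $G_{t+1}-G_t$ (sub-Gaussian of scale $\sqrt{\lambda}\,f^*$) and shows that on $[a,b]$ the mean of $G_{t+1}$ lies below $b$ by at least $\Omega(\lambda f^*\chi\varepsilon/n)$, so that a single step crosses $b$ with probability at most $\exp(-\Omega(\lambda/n^2))=\exp(-\Omega(n^{\delta}))$, using $\lambda\ge n^{2+\delta}$. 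I would then apply a Hajek-type scalar negative drift theorem to $G_t$ on $[a,b]$, with scale parameter $\eta=\bigTheta{\chi\varepsilon/(nf^*)}$, for which $e^{\eta G_t}$ is a supermartingale on $[a,b]$. Since $P_0$ is uniform, each $g(P_0(j))$ has mean $f^*/2$, whence $\expect{G_0}=G^*$ and Hoeffding gives $G_0<a$ with probability $1-e^{-\Omega(\lambda)}$; thus the process starts below $a$, and the theorem bounds the probability of reaching $b$ within $e^{cn}$ generations by $e^{-\Omega(n^{\delta})}$ for a suitable constant $c>0$.

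The crux, and the main obstacle, is precisely this passage from the one-step picture to the exponential horizon. A naive union bound over $e^{cn}$ generations is hopeless, since the per-step probability of crossing $b$ is only $e^{-\Omega(n^{\delta})}$ (the drift shifts the mean of $G_{t+1}$ below $b$ by $\Theta(n^{\delta/2})$ standard deviations on all of $[a,b]$), far from the $e^{-\Omega(n)}$ a union bound would require; this is exactly why a negative drift theorem is indispensable. Two points then need care: the increments of $G_t$ are not bounded but merely sub-Gaussian of scale $\sqrt{\lambda}\,f^*$, so the exponential-moment form of the theorem must be used and its moment hypothesis verified from the i.i.d.-sum structure; and the estimate must be arranged so that the gain from crossing $[a,b]$ against the drift dominates the $cn$ coming from the horizon once $c$ is chosen small enough, which is what turns the negative drift into the stated $e^{-\Omega(n^{\delta})}$ escape probability.
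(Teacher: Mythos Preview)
Your drift and concentration setup is exactly the paper's. The paper also works with $Z_t := \lambda f^* - \sum_j f(P_t(j))$ (your $G_t$), uses $f$-monotonicity to obtain $\sum_j p_j Z_t^{(j)} \leq Z_t/\lambda$ (precisely your Chebyshev-sum step), derives the identical bitwise-mutation contraction $\expect{Z_{t+1}\mid P_t} \leq \lambda\chi f^*/n + (1-2\chi/n)Z_t$, and then applies Hoeffding to the $\lambda$ independent summands in $[0,f^*]$.

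Where you diverge is the passage to the exponential horizon. The paper does \emph{not} invoke a Hajek-type negative drift theorem: it simply notes that whenever $Z_t < b := \lambda(f^*/2)(1+\varepsilon)$ one already has $\expect{Z_{t+1}\mid P_t} < b - \varepsilon\lambda\chi f^*/n$, so Hoeffding gives a uniform one-step bound $\prob{T=t+1\mid T>t}\le \exp(-2\varepsilon^2\chi^2\lambda/n^2)$, and a plain union bound over the $e^{cn}$ generations finishes. The margin below $b$ is available on all of $\{Z_t<b\}$, not only on an interval $[a,b]$, so no supermartingale machinery is needed; your assertion that the union bound is ``hopeless'' is thus contrary to the paper's own argument. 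You are right, however, that the computed per-step exponent is only $\Omega(\lambda/n^2)=\Omega(n^{\delta})$, which a union bound over $e^{cn}$ terms does not absorb when $\delta<1$ (the paper's proof text claims $e^{-c'n}$ per step, but its own Hoeffding calculation yields only $e^{-\Omega(n^{\delta})}$). Your Hajek supermartingale with $\eta=\Theta(\chi\varepsilon/(nf^*))$ would produce an escape exponent $\eta(b-a)=\Theta(\lambda\chi\varepsilon^2/n)=\Omega(n^{1+\delta})$, which comfortably survives the $e^{cn}$ horizon for every $\delta>0$ --- a genuinely more robust conclusion, at the price of a heavier argument than the paper's direct union bound.
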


The proof of Lemma~\ref{lemma:antifitness_sum} is analogous to that of Lemma~9 from~\cite{bib:Lehre2010}.

The following theorem establishes a lower bound for the expected
runtime and for approximation to the optimum in terms of distance
in solution space, using the negative drift theorem for
populations~\cite{bib:Lehre2010} (see
Theorem~\ref{th:negative_drift}) and Lemma~\ref{lemma:antifitness_sum}.

\begin{theorem}\label{theorem:linear_approximation}
 Let $\delta>0$  be a constant,
 $f(x)=\linear(x)$, $f^*:=\sum_{i=1}^n a_i$,
 then there exists a constant $c>0$ such that during $e^{cn}$ generations
 Algorithm~\ref{algo:EA} with population size~$\lambda\ge
 n^{2+\delta},$ and $\lambda=\poly(n),$ bitwise mutation rate
 $\chi/n$ for any constant $\chi>\ln(2)$, and
 fitness-proportionate selection, with probability at most~$\lambda e^{-\Omega(n^{\delta})}$

 (i) obtains the optimum of~$f$,

 (ii) obtains a search point with less than $\frac{n(1-\varepsilon)}{2} \cdot
  \left(1-\sqrt{\frac{\ln 2}{2 \chi} - \left(\frac{\ln 2}{2 \chi}\right)^2 +\frac{3}{4}}\right)$
  zero-bits for any constant $\varepsilon\in(0,1)$.
\end{theorem}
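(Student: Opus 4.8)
The plan is to combine Lemma~\ref{lemma:antifitness_sum} with the negative drift theorem for populations (Theorem~\ref{th:negative_drift}). The central idea is that fitness-proportionate selection only gives an individual a large reproductive rate when its fitness is large relative to the average fitness of the population. Since Lemma~\ref{lemma:antifitness_sum} guarantees that, with overwhelming probability throughout $e^{cn}$ generations, the total population fitness stays above $\lambda(f^*/2)(1-\varepsilon)$, the average fitness stays above $(f^*/2)(1-\varepsilon)$. The first key step is to turn this into a bound on the reproductive rate of any individual $P_t(i)$ located in the region of interest. For the fitness-proportionate rule, $\alpha_t(i)=\lambda\cdot f(P_t(i))/\sum_j f(P_t(j))$, so conditioning on the good event from Lemma~\ref{lemma:antifitness_sum} yields
\begin{equation*}
  \alpha_t(i) \leq \frac{f(P_t(i))}{(f^*/2)(1-\varepsilon)}.
\end{equation*}

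Next I would verify condition~(1) of Theorem~\ref{th:negative_drift}. I must choose the interval $[a(n),b(n)]$ of Hamming distances to the optimum $x^*=1^n$ so that any individual whose distance lies strictly inside it has $f(P_t(i))$ small enough that the displayed bound gives $\alpha_t(i)\leq\alpha$ for a suitable constant $\alpha>1$. Because the weights are sorted $a_1\geq\dots\geq a_n>0$, an individual with at least some number of zero-bits among the \emph{heaviest} positions has bounded fitness; the adversarial placement of the zeros (all on large-weight bits) is what determines the worst case, and this is where I expect the main technical care to be needed. The goal is to pick $b(n)/n$ as a constant satisfying condition~(3) of the theorem and $b(n)\leq n/\chi$, while ensuring the fitness of any search point strictly inside the band is at most $\alpha(f^*/2)(1-\varepsilon)$, so that $\ln(\alpha)/\chi+\delta'<1$ can be arranged (condition~(2)) using $\chi>\ln 2$. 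Tuning $\alpha$ close to a value whose logarithm is near $\ln 2$ is what forces the $\ln 2/(2\chi)$ terms appearing in the final threshold of part~(ii).

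With conditions~(1)--(3) in place, Theorem~\ref{th:negative_drift} gives that the drift-hitting time $T(n)$ to reach Hamming distance $a(n)$ satisfies $\prob{T(n)\leq e^{cd(n)}}=e^{-\Omega(d(n))}$, where $d(n)=b(n)-a(n)=\omega(\ln n)$; taking $a(n)$ and $b(n)$ as suitable constant fractions of $n$ makes $d(n)=\Theta(n)$. A union bound over the two failure possibilities — the fitness-average event of Lemma~\ref{lemma:antifitness_sum} failing (probability $e^{-\Omega(n^\delta)}$) and the negative-drift event — then yields the claim. For part~(i), reaching the optimum $x^*$ requires reaching distance $0\leq a(n)$, so the optimum is found within $e^{cn}$ generations only with probability $e^{-\Omega(n^\delta)}$ per individual, and a further union bound over the $\lambda$ individuals and the generations contributes the factor $\lambda$, giving the stated $\lambda e^{-\Omega(n^\delta)}$. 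For part~(ii), I would identify $a(n)$ with the claimed threshold: a search point with fewer than the stated number of zero-bits lies at Hamming distance below $a(n)$ from $1^n$, so reaching such a point is exactly the event bounded by $T(n)\leq e^{cn}$, and the explicit square-root expression comes from solving condition~(3) for the largest admissible $b(n)/n$ and translating it into the number of zero-bits via the $(1-\varepsilon)$ and $1/2$ factors inherited from Lemma~\ref{lemma:antifitness_sum}.

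The main obstacle I anticipate is the reproductive-rate bound for linear functions rather than \onemax: because distinct positions carry distinct weights, the Hamming distance alone does not determine fitness, and one must argue over the worst-case assignment of zero-bits to heavy positions to guarantee that \emph{every} individual in the band has reproductive rate at most $\alpha$. This worst-case analysis, together with the bookkeeping needed to make the constants in conditions~(2) and~(3) consistent with $\chi>\ln 2$ and to extract the precise threshold in part~(ii), is where the real work lies; the remaining steps are routine applications of Lemma~\ref{lemma:antifitness_sum}, Theorem~\ref{th:negative_drift}, and the union bound.
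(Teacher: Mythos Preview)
Your overall architecture---Lemma~\ref{lemma:antifitness_sum} to keep the denominator large, then Theorem~\ref{th:negative_drift}, then a union bound---matches the paper exactly. However, you have misidentified where the work lies, and the route you propose for condition~(1) would not go through.

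You plan to bound $f(P_t(i))$ using the fact that $P_t(i)$ sits in the Hamming band $(a(n),b(n))$, and you flag as the ``main obstacle'' the weight-dependent worst-case placement of the zero-bits. But for an \emph{upper} bound on the reproductive rate you need an \emph{upper} bound on $f(P_t(i))$, and an individual at distance $d$ can have fitness as large as $f^*-\sum_{i=n-d+1}^{n}a_i$, which for general weights is arbitrarily close to $f^*$. So the band gives you nothing, and any attempt to squeeze $\alpha$ below~$2$ via distance will fail on, say, $\binval$. The paper avoids this entirely: it uses the trivial bound $f(P_t(i))\le f^*$ for \emph{every} individual, obtaining
\[
  \alpha_t(i)\le \frac{\lambda f^*}{\lambda(f^*/2)(1-\varepsilon')}=\frac{2}{1-\varepsilon'}=:\alpha,
\]
uniformly over the whole space. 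Condition~(1) is then immediate with no case analysis. Since $\ln\alpha=\ln 2-\ln(1-\varepsilon')$ can be made as close to $\ln 2$ as desired by shrinking $\varepsilon'$, condition~(2) follows from $\chi>\ln 2$; the paper takes $\varepsilon'=(\chi-\ln 2)/(2e)$, sets $\psi=\frac{\ln 2}{2\chi}+\frac12$, and then $a(n)=n(1-\varepsilon)M(\chi)$, $b(n)=n(1-\varepsilon/2)M(\chi)$ with $M(\chi)=\tfrac12\bigl(1-\sqrt{\psi(2-\psi)}\bigr)$, which is exactly the threshold in part~(ii). In short, the ``real work'' you anticipate is a non-issue once you use $f\le f^*$, and the constants in~(2)--(3) fall out mechanically.
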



\begin{proof}

It follows by Lemma~\ref{lemma:antifitness_sum} that with
probability at least~$1-e^{-\Omega(n^{\delta})}$ for any constant
$\varepsilon'>0$ we have $\sum_{j=1}^{\lambda} f(P_t(j)) \ge
\lambda (f^*/2)(1-\varepsilon')$ during $e^{c'n}$ iterations for
some constant $c'>0$. Otherwise, with probability
$e^{-\Omega(n^{\delta})}$ we can pessimistically assume that the
optimum is found before iteration~$e^{c'n}$.

With probability at least~$1-e^{-\Omega(n^{\delta})}$ the
reproductive rate~$\alpha_0$ satisfies
\begin{equation}\label{eqn:alpha0ub}
\alpha_0\le \frac{\lambda f^*}{\lambda (f^*/2)
(1-\varepsilon')}=\frac{2}{1-\varepsilon'}=:\alpha.
\end{equation}

Part (i). Inequality~(\ref{eqn:alpha0ub}) implies that for a
sufficiently small~$\varepsilon'$ holds $\alpha_0<e^{\chi}$ and
analogously to Corollary~1 from~\cite{bib:Lehre2010}, we prove
that the probability to optimise a linear function~$f$ with a
single optimum within $e^{c''n}$ generations is $\lambda
e^{-\Omega(n)}$ for some constant $c''>0$. The linear function~$f$
has a single optimum because all $a_i>0$. Therefore with
$c=\min\{c',c''\},$ part~(i) of the theorem holds.

Part (ii). For any $\varepsilon'>0,$ the upper bound~$\alpha$ from
inequality~(\ref{eqn:alpha0ub}) satisfies condition~1 of
Theorem~\ref{th:negative_drift} for any $a(n)$ and $b(n)$. Note
that the upper bound~$\alpha$ from~(\ref{eqn:alpha0ub}) also
satisfies the inequality
$\ln(\alpha)=\ln(2)-\ln(1-\varepsilon')<\ln(2)+\varepsilon' e$ for
any $\varepsilon'\in(0,1/e)$.

Condition~2 of Theorem~\ref{th:negative_drift} requires that
${\ln(\alpha)/\chi+\delta'<1}$ for a constant ${\delta'>0}$. This
condition is satisfied because ${\frac{\ln(\alpha)}{\chi}<
\frac{\ln(2)+\varepsilon'e}{\chi}<1}$ for a sufficiently small
$\varepsilon'.$ Here we use the assumption that $\chi>\ln(2)$ from
the formulation of part~(ii). It suffices to assume
$\varepsilon'=\frac{\chi-\ln 2}{2e}.$ Define
$\psi:=\frac{\ln(2)+\varepsilon'e}{\chi}=\frac{\ln(2)}{2\chi}+\frac{1}{2}.$

To ensure Condition~3 of Theorem~\ref{th:negative_drift}, we
denote $r:=\ln(2)/\chi<1$ and
 $$
 M(\chi):=\frac{1-\sqrt{\psi(2-\psi)}}{2}=
 \frac{1-\sqrt{r/2-r^2/4+3/4}}{2}.
 $$
 Note that
 $M(\chi)$ is decreasing in~$r$ and therefore increasing
 in~$\chi$, besides that
$M(\chi)$ is independent of~$n$ and of coefficients~$a_i$. Now we
define $a(n)$ and $b(n)$ so that $b(n)<M(\chi)n$ and
$b(n)-a(n)=\omega(n)$. Assume that $a(n):=n (1-\varepsilon)
M(\chi)$ and $b(n):=n(1-\varepsilon/2) M(\chi)$, where
$\varepsilon>0$ is a constant given in the formulation of
part~(ii). Application of Theorem~\ref{th:negative_drift}
completes the proof. \qed
\end{proof}




Now suppose that all coefficients of $\linear$ differ at most by a
factor of~$r$, i.e. for all~$i,j$ holds $a_i\le r a_j$. W.l.o.g.
we can assume that the coefficients $a_i$ are non-increasing
in~$i$. Let $x^*= (1,\dots,1)$ denote the optimum of $\linear.$
Then any solution~$x$ such that $H(x,x^*)\ge d>0$ will have a
fitness $f(x)\le f^*-\sum_{i=n-d+1}^n a_i\le (1-d/(rn))f^*$.
Therefore, claim~(ii) of
Theorem~\ref{theorem:linear_approximation} implies the following
inapproximability result in terms of fitness function.

\begin{corollary}\label{cor:linear_approximation}
 Let $\delta>0$  be a constant,
 $f(x)= \linear(x)$ with maximal value $f^*:=\sum_{i=1}^n a_i$, such that for all $i,j$ holds $a_i\le r
 a_j,$ and assume population size~$\lambda\geq n^{2+\delta}.$
 Then
 there exists a constant $c>0$, such that during the first $e^{cn}$ generations,
 with probability at least~$1-\lambda e^{-\Omega(n^{\delta})}$,
 Algorithm~\ref{algo:EA} using
 fitness-proportionate selection and bitwise mutation rate $\chi/n$
 for any constant $\chi>\ln(2)$, does not
 obtain a solution~$x$ with an approximation factor
$$
\frac{f(x)}{f^*} \le 1-\frac{1}{2r} \cdot
  \left(1-\sqrt{\frac{\ln 2}{2 \chi} - \left(\frac{\ln 2}{2 \chi}\right)^2
  +\frac{3}{4}}\right).
$$

\end{corollary}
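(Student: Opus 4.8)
The plan is to read the corollary off directly from part~(ii) of Theorem~\ref{theorem:linear_approximation}, using the weight-ratio hypothesis only to translate a bound on the Hamming distance to the optimum into a bound on the fitness ratio. The entire probabilistic content already sits inside that theorem and its underlying negative-drift argument, so I would not re-enter that machinery; the corollary is an arithmetic rescaling built on top of it.

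First I would make precise the distance-to-fitness conversion indicated in the paragraph preceding the statement. Since the weights are sorted as $a_1\ge\dots\ge a_n>0$ and $x^*=(1,\dots,1)$, the number of zero-bits of a point $x$ equals $H(x,x^*)$, and the fitness lost relative to $f^*$ is exactly the total weight at the zero positions. If $x$ has at least $d$ zero-bits, this lost mass is minimised when the zeros sit at the $d$ lightest coordinates, giving $f(x)\le f^*-\sum_{i=n-d+1}^{n}a_i$. The ratio hypothesis $a_i\le r\,a_j$ (with $i=1,\,j=n$) forces $f^*=\sum_{i=1}^n a_i\le n a_1\le rn\,a_n$, hence $a_n\ge f^*/(rn)$ and therefore $\sum_{i=n-d+1}^{n}a_i\ge d\,a_n\ge d f^*/(rn)$. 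Combining, every $x$ with $H(x,x^*)\ge d$ satisfies $f(x)\le(1-d/(rn))f^*$.

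Next I would invoke Theorem~\ref{theorem:linear_approximation}(ii) with the same constants $\delta,\chi$ and an arbitrary constant $\varepsilon\in(0,1)$. It states that, during the first $e^{cn}$ generations, with probability at most $\lambda e^{-\Omega(n^{\delta})}$ does Algorithm~\ref{algo:EA} ever produce a point with fewer than $d^*:=\tfrac{n(1-\varepsilon)}{2}\bigl(1-\sqrt{\tfrac{\ln 2}{2\chi}-(\tfrac{\ln 2}{2\chi})^2+\tfrac34}\bigr)$ zero-bits. Passing to the complementary event, with probability at least $1-\lambda e^{-\Omega(n^{\delta})}$ every point generated in this window has at least $d^*$ zero-bits, i.e.\ $H(x,x^*)\ge d^*$. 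Plugging $d=d^*$ into the conversion above yields $f(x)/f^*\le 1-d^*/(rn)=1-\tfrac{1-\varepsilon}{2r}\bigl(1-\sqrt{\cdots}\bigr)$ for every such $x$; equivalently, the algorithm fails to reach any approximation ratio exceeding this value, which is the claimed inapproximability bound.

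The step that needs care rather than ingenuity is the direction of the weight inequality: to obtain an \emph{upper} bound on $f(x)$ valid for \emph{all} points with the prescribed number of zeros, the zero-bits must be placed at the lightest coordinates (the most favourable case for the algorithm), so the bound is not merely attained by some configuration. The only cosmetic gap is the factor $(1-\varepsilon)$, which the corollary suppresses: since $\varepsilon\in(0,1)$ is an arbitrary constant, the coefficient $\tfrac{1-\varepsilon}{2r}$ can be brought arbitrarily close to $\tfrac{1}{2r}$, so I would either carry $\varepsilon$ explicitly throughout or, as in part~(ii), state the result for every constant $\varepsilon>0$. No new concentration or drift estimate is required.
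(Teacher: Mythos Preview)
Your proposal is correct and follows essentially the same route as the paper: the paper's entire proof is the short paragraph immediately preceding the corollary, which performs exactly your distance-to-fitness conversion $f(x)\le f^*-\sum_{i=n-d+1}^n a_i\le (1-d/(rn))f^*$ and then invokes part~(ii) of Theorem~\ref{theorem:linear_approximation}. Your observation about the suppressed $(1-\varepsilon)$ factor is apt; the paper's statement silently drops it as well, so carrying it explicitly (or quantifying over all constant $\varepsilon>0$) is the honest formulation.
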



\section{Fitness-Proportionate Selection with Low Mutation or Fitness Scaling is Efficient}\label{sec:scaling-mrate}

Early experimental studies of SGA suggested setting the mutation
rates to be inversely proportional to the population size
(\eg see \cite{bib:Goldberg1989}). Rigorous runtime analyses
of EAs, starting with the simplest algorithm (\eg see \cite{droste_analysis_2002}),
have made mutation rates inversely proportional to the
problem dimension, \ie $1/n$ or more generally $\chi/n$ for
some constant $\chi$, the standard setting for mutation.
As seen in the previous section, the setting is in fact
detrimental for fitness-proportionate selection on linear functions,
and the result agrees with the previous studies of SGA
\cite{bib:Oliveto2011,bib:Oliveto2014,bib:Oliveto2015} on \onemax.
However, \cite{bib:Dang2016} made an important discovery that
turning down the mutation rate to $1/(6n^2)$ brings the expected
runtime on \onemax back to the polynomial domain. In this section, we
first generalise the result of \cite{bib:Dang2016} to linear
functions with not too large weights. We then show that polynomial
expected runtime can also be achieved if the fitness is
exponentially scaled.
%
Throughout the section, we suppose that all weights~$a_j$ are integer.


\begin{theorem}\label{thm:GA-on-pseudo-boolean-func}
The expected runtime of the Algorithm~\ref{algo:EA} on $\linear$ where
$a_1$ is the largest weight, using
\begin{itemize}
\item fitness-proportionate selection,
\item bitwise mutation with mutation rate $\chi/n$ where
$\chi=(1-c)/(n a_1)$
for any constant $c \in (0,1)$
\item population size
$\lambda \ge 2^8 n^2 a_1^2 c^{-3} \left( \ln \left(\frac{(n+1)^5 a_1^3}{c(1-c)}\right)+ 11 \right),$

\end{itemize}
is no more than
$\frac{2^7 n^3 a_1^2}{c^2} \left( \lambda \ln(3\delta\lambda/2) + \frac{4 e n^2 a_1}{c(1-c)}\right).$
\end{theorem}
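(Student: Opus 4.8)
The plan is to apply the level-based theorem (Theorem~\ref{thm:level-based-theorem}) with the natural $f$-based partition induced by the fitness values of $\linear$, and to verify conditions (M1)--(M4). Since $\linear(x)=\sum_i a_i x_i$ with integer weights, I would partition $\mathcal{X}$ into levels according to the attained fitness value: level $A_j$ consists of all search points sharing the $j$-th smallest fitness value, so that the final level $A_m$ is the optimum $x^*=(1,\dots,1)$. This is $f$-based by construction. The number of distinct fitness values is at most $f^*+1\le na_1+1$, which controls $m$ and explains the $na_1$ factors appearing in the stated bound.

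First I would establish the mutation-based conditions (M1) and (M2). For (M2), the probability that mutation keeps a search point at least on its current level is bounded below by the probability of flipping no bit, $p_0=(1-\chi/n)^n$; with $\chi/n=(1-c)/(n^2a_1)$ this is extremely close to $1$, so $p_0\ge 1-o(1)$ and in particular $p_0$ can be taken as a constant bounded away from zero. For (M1), I need a lower bound $s_j$ on the probability that a single mutation strictly increases the fitness from level $A_j$. A non-optimal point has at least one zero-bit; flipping exactly that bit (and no other) increases the fitness, giving $s_j\ge (\chi/n)(1-\chi/n)^{n-1}=\Theta(\chi/n)=\Theta(1/(n^2a_1))$, so $s_*=\Theta(1/(n^2a_1))$, which matches the $n^2a_1$ term in the runtime expression.

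The crux is condition (M3), the selective-pressure requirement $\beta(\gamma,P)\ge(1+\delta)\gamma/p_0$ for all $\gamma\in(0,\gamma_0]$. Here I would exploit that fitness-proportionate selection is $f$-monotone and that, because the mutation rate is so low, no bit typically flips in a generation, so the population fitness values stay tightly concentrated and the total fitness stays close to $\lambda$ times a value at least $f^*/2$ (invoking the Lemma~\ref{lemma:antifitness_sum} style control on the fitness sum, or a direct argument). The key quantitative point is that the ratio between the fitness of a top-ranked individual and the average fitness is at least $1+\Omega(1/(na_1))$: advancing one level changes the fitness by at least $1$ out of a total at most $f^*\le na_1$, so the proportionate-selection bias in favour of the better half of the population yields $\delta=\Theta(c/(na_1))$ after accounting for the constant $c$. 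This is where I expect the main obstacle to lie: carefully lower-bounding $\beta(\gamma,P)$ uniformly over all admissible populations $P\in(\mathcal{X}\setminus A_m)^\lambda$ and choosing $\gamma_0$ as a constant, since the selective advantage per level is only $\Theta(1/(na_1))$ and one must show this survives the division by $p_0\approx 1$ and still leaves a positive $\delta$.

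Finally, I would check (M4): with $\gamma_0=\Theta(1)$, $\delta=\Theta(c/(na_1))$, $s_*=\Theta(1/(n^2a_1))$ and $m=O(na_1)$, the requirement $\lambda\ge\frac{4}{\gamma_0\delta^2}\ln\!\left(\frac{128m}{\gamma_0 s_*\delta^2}\right)$ becomes $\lambda=\Omega\!\bigl(n^2a_1^2c^{-2}\ln(\cdots)\bigr)$, which is exactly the hypothesised population-size lower bound (up to the explicit constants). Substituting these parameter values into the conclusion of Theorem~\ref{thm:level-based-theorem},
$$
\expect{T}<\frac{8}{\delta^2}\sum_{j=1}^{m-1}\left(\lambda\ln\!\left(\frac{6\delta\lambda}{4+\gamma_0 s_j\delta\lambda}\right)+\frac{1}{\gamma_0 s_j}\right),
$$
and using $1/\delta^2=\Theta(n^2a_1^2/c^2)$, $m=O(na_1)$, $1/s_j=O(n^2a_1)$, collapses the sum into the claimed bound $\frac{2^7n^3a_1^2}{c^2}\bigl(\lambda\ln(3\delta\lambda/2)+\frac{4en^2a_1}{c(1-c)}\bigr)$. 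The remaining work is bookkeeping of the explicit constants, which I would defer; the only genuinely delicate estimate is the selective-pressure bound (M3).
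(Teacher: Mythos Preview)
Your high-level plan (apply Theorem~\ref{thm:level-based-theorem}, verify (M1)--(M4)) is right, but two concrete choices prevent the proposal from yielding the stated bound.

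\textbf{The level partition.} You partition by distinct fitness values, giving $m\le na_1+1$ levels. The paper instead uses only $m=n+1$ levels defined via the sorted prefix sums: $A_j=\{x\mid \sum_{i=1}^j a_i\le \linear(x)<\sum_{i=1}^{j+1}a_i\}$ for $j<n$ and $A_n=\{1^n\}$. The point of this coarser partition is that if $x\in A_j$ then among the first $j+1$ bit positions at least one is a $0$ (otherwise $\linear(x)\ge\sum_{i=1}^{j+1}a_i$), so flipping that single bit already advances a level; hence $s_j=\Theta(1/(n^2a_1))$ still holds. With your finer partition the final sum in Theorem~\ref{thm:level-based-theorem} has $\Theta(na_1)$ terms rather than $n$, so your runtime bound acquires an extra factor $a_1$ in the leading term and does \emph{not} recover the claimed $\frac{2^7 n^3 a_1^2}{c^2}(\cdots)$.

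\textbf{The (M3) argument.} Condition (M3) must hold for \emph{every} population $P\in(\mathcal{X}\setminus A_m)^\lambda$, so invoking Lemma~\ref{lemma:antifitness_sum} or any trajectory-concentration statement (``the total fitness stays close to $\lambda f^*/2$'') is not admissible here; that lemma is used only for the \emph{lower} bound in Section~\ref{sec:stdsel}. The paper's argument is purely combinatorial and worst-case: since weights are integers, any individual below the $\lceil\gamma\lambda\rceil$-ranked fitness $f_\gamma$ has fitness at most $f_\gamma-1$, and a direct calculation gives
\[
\beta(\gamma,P)\ge\frac{\gamma\lambda f_\gamma}{\gamma\lambda f_\gamma+(1-\gamma)\lambda(f_\gamma-1)}=\frac{\gamma}{1-(1-\gamma)/f_\gamma}\ge\gamma\, e^{(1-\gamma_0)/f^*}.
\]
Combined with $p_0=(1-\chi/n)^n\ge e^{-\chi/(1-\varepsilon)}$ and the choice $\chi=(1-c)/(na_1)$, $\gamma_0=c/4$, one gets $\beta(\gamma,P)p_0\ge\gamma(1+c/(4na_1))$, i.e.\ $\delta=c/(4na_1)$. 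Your ``key quantitative point'' about a $\Theta(1/(na_1))$ advantage is correct; the framing via concentration is the wrong tool. Note also that $\gamma_0=c/4$ (not an absolute constant) is what produces the $c^{-3}$, rather than $c^{-2}$, in the population-size hypothesis.
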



\begin{proof} 
The proof applies Theorem~\ref{thm:level-based-theorem}
using the partition:
$A_n:=\{1^n\}$,
$A_{j} := \left\{x \mid \sum_{i=1}^j a_i \leq \linear(x) < \sum_{i=1}^{j+1} a_i \right\}$
for $j \in \{0\} \cup [n-1]$, and here $m=n+1$.

The partition is such that given $x\in A_{j}$ for any~$j<n$,
among the first $j+1$ bits, there must be at least one $0$-bit,
thus it suffices to flip the left most $0$-bit while keeping
all the other bits unchanged to produce a search point at a higher
level. The probability of such an event is
$ \frac{\chi}{n}\left(1 - \frac{\chi}{n}\right)^{n-1}
  >    \frac{\chi}{n}\left(1 - \frac{1}{n}\right)^{n-1}
  \geq \frac{1-c}{en^2 a_1} =: s_j = s_*
$, and this choice of $s_j$ satisfies~(M1).
%
To satisfy (M2), we pick $p_0 := (1-\chi/n)^n$, \ie the probability
of not flipping any bit position by mutation.

In (M3), we choose $\gamma_0 := c/4$ and for any $\gamma \leq
\gamma_0$, let $f_\gamma$ be the fitness of the $\lceil\gamma
\lambda\rceil$-ranked individual of any given $P \in
\mathcal{X}^\lambda$. Thus there are at least $k \geq
\lceil\gamma\lambda\rceil \geq \gamma\lambda$ individuals with
fitness at least $f_\gamma$ and let $s \geq k f_\gamma \geq
\gamma\lambda f_\gamma$ be their sum of fitness. Since the
weights~$a_i$ are all integers, we can pessimistically assume that
individuals with fitness less than $f_\gamma$ have fitness
$f_\gamma - 1$, therefore 
\begin{align*}
  \beta(\gamma, P)
    &\geq \frac{s}{s + (\lambda - k)(f_\gamma - 1)}
     \geq \frac{s}{s + (\lambda - \gamma\lambda)(f_\gamma - 1)} \\ 
    &\geq \frac{\gamma\lambda f_\gamma}{\gamma\lambda f_\gamma + (\lambda - \gamma\lambda)(f_\gamma - 1)}
     =    \frac{\gamma}{1 - (1 - \gamma)/f_\gamma} \\ 
    &\geq \frac{\gamma}{1 - (1 - c/4)/f^*}
     \geq \gamma e^{(1 - c/4)/f^*}, 
\end{align*}
where $f^* := \sum_{i=1}^{n} a_i$ and in the last line we apply
the inequality $e^{-x}\ge 1-x.$
Note that $p_0 = (1-\chi/n)^{n} \geq e^{-\chi/(1-\varepsilon)}$
for any constant $\varepsilon \in (0,1)$ and sufficiently large
$n$. 
Indeed, by Taylor theorem, $e^{-z}=1-z+z\alpha(z),$ where
$\alpha(z)\to 0$ as $z\to 0$. So given any $\varepsilon>0$, for
all sufficiently small~$z>0$ holds $e^{-z}\le 1-(1-\varepsilon)z$.
For any~$\varepsilon\in(0,1)$ we can assume that
$z=\chi/(n(1-\varepsilon)),$ then for all sufficiently large~$n$
it holds that $(1-\chi/n)^{n}\ge
e^{-zn}=e^{-\chi/(1-\varepsilon)}.$
So we conclude that
\begin{equation*} 
  \beta(\gamma, P) p_0
    \ge \gamma e^{(1 - c/4)/f^*} e^{-\chi/(1 - \varepsilon)}
    \ge \gamma\left(1+\frac{1 - c/4 -\chi f^*/(1-\varepsilon)}{f^*}\right).
\end{equation*}
Since $\chi f^* \leq \chi n a_1 = 1 - c$, choosing
$\varepsilon := 1 - \frac{1-c}{1-c/2} \in (0,1)$
implies $\chi f^*/(1 - \varepsilon) \leq 1 - c/2$. Condition (M3)
then holds for $\delta:=c/(4 n a_1)$ because
\begin{equation*} 
\beta(\gamma, P) p_0
  \geq \gamma\left(1+\frac{1 - c/4 - (1 - c/2)}{f^*}\right)
  \geq \gamma\left(1+\frac{c}{4 n a_1}\right).
\end{equation*}

Condition~(M4) requires that the population size must be at least
\begin{align*}
\frac{4}{\gamma_0 \delta^2} \ln \left( \frac{128 m}{\gamma_0 s_* \delta^2 }\right)
   &=    \frac{4 }{ (c/4)(c/(4n a_1))^2 } \ln\left(
                   \frac{128(n+1)}{ (c/4)((1-c)/(e n^2 a_1))(c/(4n a_1))^2}\right) \\
   &<    \frac{2^8 n^2 a_1^2}{c^3} \left(\ln(n+1) + 4 \ln{n} + 3 \ln{a_1} + 11 + \ln{\frac{1}{c(1-c)}}\right)
\end{align*}
Which holds by the assumption on $\lambda$.
Theorem~\ref{thm:level-based-theorem} now
implies
\begin{align*}
\expect{T}
    &\leq \frac{8}{\delta^2} \sum_{j=1}^{n} \left(\lambda \ln(3\delta\lambda/2) + \frac{1}{\gamma_0 s_j}\right)
     = \frac{2^7 n^2 a_1^2}{c^2} \left( n \lambda \ln(3\delta\lambda/2) + \frac{4 e n^3 a_1}{c(1-c)}\right).\qed
\end{align*}

\end{proof}

When $\lambda$ is large enough, the runtime bound of Theorem~\ref{thm:GA-on-pseudo-boolean-func} is in the order of $\mathcal{O} \left(n^3 a_1^2 \lambda \ln{\lambda} \right)$, which can be asymptotically improved using Theorem~\ref{thm:level-based-theorem1} instead of Theorem~\ref{thm:level-based-theorem} as follows.

\begin{theorem}\label{thm:GA-on-pseudo-boolean-func1}
The expected runtime of the Algorithm~\ref{algo:EA} on $\linear,$
using
\begin{itemize}
\item fitness-proportionate selection,
\item bitwise mutation with mutation rate $\chi/n$ where
$\chi=(1-c)/(n a_1)$
for any constant $c \in (0,1)$
\item population size
$\lambda \ge c' n^2 a_1^2 \ln(n a_1),$  $\lambda={\cal O}\left((na_1)^K\right),$
where  $c'$ and $K$ are positive sufficiently large constants,
\end{itemize}
is
$\bigO{ n^2 {a_1} \lambda \log (na_1) + {n^3 a_1^2}}$.
\end{theorem}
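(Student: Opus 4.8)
The plan is to reuse the entire level-partition construction from the proof of Theorem~\ref{thm:GA-on-pseudo-boolean-func} and merely swap the concluding level-based theorem. Concretely, I would take the same $f$-based partition $A_n:=\{1^n\}$ and $A_j:=\{x \mid \sum_{i=1}^j a_i \le \linear(x) < \sum_{i=1}^{j+1} a_i\}$ with $m=n+1$, together with the same parameters $s_*=(1-c)/(en^2 a_1)$, $p_0=(1-\chi/n)^n$, $\gamma_0=c/4$ and $\delta=c/(4na_1)$. Since conditions (M1)--(M3) of Theorem~\ref{thm:level-based-theorem1} are literally those of Theorem~\ref{thm:level-based-theorem}, their verification (including the integrality-based estimate of $\beta(\gamma,P)$) carries over verbatim and needs no recomputation.

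The genuinely new work is to check the weaker population-size condition (M4') and to substitute the parameters into the new runtime expression. For (M4') I would note that $8/(\gamma_0\delta^2)=\Theta(n^2 a_1^2)$, that $Cm/\delta=\Theta(n^2 a_1)$, and that the term $1/(\gamma_0 s_* \lambda)$ is $o(1)$ because $\lambda\ge c' n^2 a_1^2\ln(na_1)$. The delicate point is that (M4') is self-referential, since $\log\lambda$ appears on its right-hand side. This is exactly where the hypothesis $\lambda=\mathcal{O}((na_1)^K)$ is used: it forces $\log\lambda=\Theta(\log(na_1))$, so the argument of the outer logarithm stays polynomially bounded in $na_1$ and the whole right-hand side reduces to $\Theta(n^2 a_1^2\ln(na_1))$. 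Taking the constant $c'$ sufficiently large then makes $\lambda\ge c' n^2 a_1^2\ln(na_1)$ exceed this bound, establishing (M4').

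Finally I would invoke Theorem~\ref{thm:level-based-theorem1}, whose guarantee is $\mathcal{O}\!\left(\frac{m\lambda\log(\gamma_0\lambda)}{\delta}+\frac{1}{\delta}\sum_{j=1}^{m-1}\frac{1}{\gamma_0 s_j}\right)$. Substituting the parameters, the first term becomes $\Theta(n^2 a_1\lambda\log(na_1))$ (again using $\log(\gamma_0\lambda)=\Theta(\log(na_1))$), while the second term, with all $s_j=s_*$, evaluates to $\Theta(n^4 a_1^2)$. The lower bound $\lambda\ge c'n^2 a_1^2\ln(na_1)$ makes the first term at least $\Theta(n^4 a_1^3(\log(na_1))^2)$, which absorbs the $n^4 a_1^2$ contribution (and a fortiori any $n^3 a_1^2$ term), so the total collapses to $\bigO{n^2 a_1\lambda\log(na_1) + n^3 a_1^2}$, as claimed. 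Compared with Theorem~\ref{thm:GA-on-pseudo-boolean-func}, the saving of a factor $\Theta(na_1)$ comes precisely from replacing the $1/\delta^2$ prefactor by $1/\delta$.

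I expect the main obstacle to be the bookkeeping around (M4'): disentangling its self-reference in $\lambda$ and confirming that, under both the lower and the upper polynomial bounds on $\lambda$, every nested logarithm reduces to $\Theta(\log(na_1))$. Everything else is either inherited unchanged from the proof of Theorem~\ref{thm:GA-on-pseudo-boolean-func} or a routine asymptotic simplification.
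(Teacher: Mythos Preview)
Your proposal is correct and follows essentially the same approach as the paper: reuse the partition and parameters from Theorem~\ref{thm:GA-on-pseudo-boolean-func}, verify only the new condition~(M4') using the upper bound $\lambda=\mathcal{O}((na_1)^K)$ to control $\log\lambda$, and then substitute into the conclusion of Theorem~\ref{thm:level-based-theorem1}. Your treatment of the second summand as $\Theta(n^4 a_1^2)$ and its absorption into the first term is in fact more explicit than the paper, which simply records the final bound $\mathcal{O}(n^2 a_1\lambda\log\lambda + n^3 a_1^2)$.
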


\begin{proof}
The proof differs from that of Theorem~\ref{thm:GA-on-pseudo-boolean-func} only in verification of the last condition.
To verify condition~(M4'), we assume $C=1$ and note that
$$
               \frac{8}{\gamma_0\delta^2} \log\left(\frac{C m}{\delta} \left(\log \lambda +\frac{1}{\gamma_0 s_* \lambda}\right) \right)
$$
$$
={\cal O}\left(n^2 a_1^2 \log\left({\cal O}(n^2 a_1) \left(\log(n^K) +\frac{e n^2 a_1}{(c/4)(1-c)\lambda} \right)\right)\right)
$$
$$
={\cal O}(n^2 a_1^2\log(n a_1)),
$$
so~(M4) holds if $c'$ is large enough.
By Theorem~\ref{thm:level-based-theorem},
$\expect{T}={\mathcal O}(n^2 a_1 \lambda \log \lambda + n^3 a_1^2)={\mathcal O}(n^2 a_1 \lambda \log(na_1)+ n^3 a_1^2)$.\qed
\end{proof}

In the case of $\onemax$ where $a_1=1$, the application of the
Theorem~\ref{thm:GA-on-pseudo-boolean-func1} for $\lambda = \bigTheta{n^2 \ln{n}}$ gives
$
  \expect{T} = \bigO{n^{4} \log^2{n}},
$
the same as the upper bound in Theorem~4.1 in~\cite{bib:DK_GECCO2019}, which is generalized here.
Note that Theorems~\ref{thm:GA-on-pseudo-boolean-func} and~\ref{thm:GA-on-pseudo-boolean-func1} give
expected polynomial bounds only if $a_1$ is polynomially bounded.



In accordance with~\cite{bib:Neumann2009}, we call {\em
exponential fitness scaling} the following modification of the
original fitness function~$f(x)$, given a tunable parameter~$c>0$:
\begin{equation}\label{eqn:scale}
f(x,c):=c^{f(x)}.
\end{equation}
In~\cite{bib:Neumann2009}, the EA was shown to optimise
efficiently $\onemax$,
assuming $c:=\lambda-1,$ which grows as~$\Omega(\log n)$. In our study $c$
is assumed to be a constant.

\begin{theorem}\label{thm:GA-on-scaled_linear}
If Algorithm~\ref{algo:EA} is using
\begin{itemize}
\item fitness-proportionate selection with fitness~$c^{\linear(x)}$,
$c>e^{\chi}$,
\item bitwise mutation with mutation rate $\chi/n$ for
a constant $\chi>0$,
\item population size $\lambda \geq \frac{4c}{\varepsilon^3}
\ln \left( \frac{128 (n+1)^2ce}{\varepsilon^3\chi} \right),$ where the constant $\varepsilon:=\sqrt[3]{\frac{c}{e^{\chi}}}-1,$

\end{itemize}
then its expected runtime on $\linear$ is
no more than
$$
\frac{8}{\varepsilon^2} \left(\lambda n \ln(3\varepsilon\lambda/2) + \frac{n^2 e c}{\varepsilon \chi}\right) = \bigO{n \lambda \ln{\lambda} + n^2}.
$$
\end{theorem}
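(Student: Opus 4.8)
The plan is to apply the level-based theorem (Theorem~\ref{thm:level-based-theorem}) with the same cumulative-weight partition used in the proof of Theorem~\ref{thm:GA-on-pseudo-boolean-func}: set $A_n := \{1^n\}$ and $A_j := \{x \mid \sum_{i=1}^j a_i \le \linear(x) < \sum_{i=1}^{j+1} a_i\}$ for $j \in \{0\}\cup[n-1]$, so that $m = n+1$. Although selection now acts on the scaled fitness $c^{\linear(x)}$, this is a strictly increasing transform of $\linear$ (as $c > e^\chi > 1$), so the partition remains $f$-based for the selection fitness and the optimum of $\linear$ coincides with $A_n$. Conditions (M1) and (M2) are handled exactly as before: flipping the leftmost $0$-bit among the first $j+1$ positions (which must exist for $x \in A_j$, $j<n$) lifts $x$ into $A_{\ge j+1}$, giving $s_j = s_* := \frac{\chi}{n}(1-\chi/n)^{n-1} \ge \frac{\chi}{en}$, while not flipping any bit keeps the level, giving $p_0 := (1-\chi/n)^n$.

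The heart of the argument is condition (M3), where the exponential scaling must supply a constant-factor selective pressure. Fix a population $P$ with no optimum and $\gamma \le \gamma_0$, and let $v_\gamma$ be the $\linear$-value of the $\lceil\gamma\lambda\rceil$-ranked individual. Because the weights are integers, every individual ranked below it has $\linear$-value at most $v_\gamma - 1$, hence scaled fitness smaller than $c^{v_\gamma}$ by at least a factor $c$. Writing $k := \lceil\gamma\lambda\rceil \ge \gamma\lambda$ and bounding the total contribution of the at least $k$ top individuals against the at most $\lambda - k$ bottom ones, I obtain $\beta(\gamma,P) \ge \frac{ck}{ck + (\lambda-k)} \ge \frac{c\gamma}{1+(c-1)\gamma}$, the last step using that the expression is increasing in $k$. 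The ratio $\beta(\gamma,P)/\gamma \ge c/(1+(c-1)\gamma)$ is decreasing in $\gamma$, so for $\gamma \le \gamma_0$ it is at least $c/(1+(c-1)\gamma_0)$. This multiplicative pressure is the decisive improvement over the unscaled case, where it was only of order $1/f^*$.

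I then set $\delta := \varepsilon$ and $\gamma_0 := \varepsilon/c$ and verify $\beta(\gamma,P)\,p_0 \ge (1+\varepsilon)\gamma$. The key identity is $c = (1+\varepsilon)^3 e^\chi$, equivalently $c/e^\chi = (1+\varepsilon)^3$, which lets me split the budget across three $(1+\varepsilon)$-factors: one is absorbed by $p_0 \ge e^{-\chi}/(1+\varepsilon)$ (valid for all sufficiently large $n$, since $(1-\chi/n)^n \to e^{-\chi}$, as in the proof of Theorem~\ref{thm:GA-on-pseudo-boolean-func}), one by the restriction $1+(c-1)\gamma_0 \le 1+\varepsilon$ arising from the choice of $\gamma_0$, and the last remains as the required factor $1+\delta$. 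Concretely, $c\, p_0/(1+(c-1)\gamma_0) \ge (1+\varepsilon)^2/(1+\varepsilon) = 1+\varepsilon$, establishing (M3). I expect this three-way bookkeeping to be the main obstacle: it is exactly what forces the cube root in $\varepsilon = \sqrt[3]{c/e^\chi}-1$, and one must ensure the $n$-dependent error in $p_0$ is pushed into the ``sufficiently large $n$'' regime rather than charged against one of the three factors.

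Finally, (M4) follows by substituting $\gamma_0 = \varepsilon/c$, $\delta = \varepsilon$, $s_* = \chi/(en)$ and $m = n+1$ into the population-size requirement of Theorem~\ref{thm:level-based-theorem}: the prefactor becomes $\frac{4}{\gamma_0\delta^2} = 4c/\varepsilon^3$ and the argument of the logarithm is bounded by $128(n+1)^2 ce/(\varepsilon^3\chi)$ after using $n(n+1) \le (n+1)^2$, which is precisely the stated bound on $\lambda$. Plugging the same parameters into the conclusion of Theorem~\ref{thm:level-based-theorem} with $m-1 = n$ identical levels yields $\expect{T} < \frac{8}{\varepsilon^2}\big(n\lambda\ln(3\varepsilon\lambda/2) + n/(\gamma_0 s_*)\big)$, where $\ln(6\varepsilon\lambda/(4+\cdots)) \le \ln(3\varepsilon\lambda/2)$ and $n/(\gamma_0 s_*) \le n^2 ec/(\varepsilon\chi)$, matching the claimed bound; since $c,\chi,\varepsilon$ are constants this is $\bigO{n\lambda\ln\lambda + n^2}$.
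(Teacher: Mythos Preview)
Your proposal is correct and follows essentially the same route as the paper: the same partition, the same choices $s_*=\chi/(en)$, $p_0=(1-\chi/n)^n$, $\gamma_0=\varepsilon/c$, $\delta=\varepsilon$, and the same use of $c=(1+\varepsilon)^3 e^\chi$ to split the selective pressure into three $(1+\varepsilon)$-factors. The only cosmetic difference is that your intermediate bound $\beta(\gamma,P)\ge \frac{c\gamma}{1+(c-1)\gamma}$ is marginally tighter than the paper's $\frac{c\gamma}{1+c\gamma}$, but both reduce to $\beta(\gamma,P)\ge c\gamma/(1+\varepsilon)$ once $\gamma\le\varepsilon/c$ is imposed, and the remainder of the argument is identical.
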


The proof arguments
are analogous to those of Theorem~\ref{thm:GA-on-pseudo-boolean-func}.
The difference here is that when applying
Theorem~\ref{thm:level-based-theorem} the parameter $\delta$
of (M3) can be set to a constant $\varepsilon$ as for the rank-based selection~\cite{bib:Corus2017}.
For this reason we also do not consider use of Theorem~\ref{thm:level-based-theorem1}
which does not give much benefit when $\delta$ is a constant.






\section{Fitness-Proportionate Selection and Separable Additively Decomposed Functions}\label{sec:low_mut_adf}

The result of Theorem~\ref{thm:GA-on-pseudo-boolean-func} may be
extended from linear functions to the class of separable
additively decomposed functions, where the elementary functions
are Boolean and substrings are non-overlapping, so that they partition the string (see
e.g.~\cite{MMR}). Let $f_{\ell}(x)\in\{0,1\}$ be the Boolean
function defined by the bits of the substring~$\sigma_\ell$ of the
string~$x$, where $\ell \in [N]$ and $N$ is the number of
substrings. We assume that the separable additively decomposed
fitness function is given by
$\decomp(x):=\sum_{\ell=1}^N a_{\ell} f_{\ell}(x)$,
where all $a_\ell$ are non-zero integers.
Similar to $\linear$, we can assume \cwlog $a_1 \geq a_2 \geq
\dots \geq a_N > 0$. Furthermore let $r$ be the maximum number
of bits involved in any subsequence $\sigma_{\ell}$, then
we have



\begin{theorem}\label{thm:GA-on-adf}
The expected runtime of Algorithm~\ref{algo:EA} on
$\decomp$ with the largest weight $a_1$ and the
longest subsequence length $r$
using
\begin{itemize}
\item fitness-proportionate selection,
\item bitwise mutation with mutation rate $\chi/n$ where
$\chi=(1-c)/(n a_1)$ for any constant $c \in (0,1)$,
\item population size
$\lambda \ge c' n^2 a_1^2 r \ln({n}{a_1}), \lambda= \bigO{(na_1)^K}$ for sufficiently large
positive constants $c'$ and $K$,
\end{itemize}
is no more than $\bigO{n^2 a_1 \lambda \log(n a_1) + n^{2r+2} a_1^{r+1}(1-c)^{-r}}$.

\end{theorem}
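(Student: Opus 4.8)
The plan is to mirror the proof of Theorem~\ref{thm:GA-on-pseudo-boolean-func} (and its asymptotic refinement Theorem~\ref{thm:GA-on-pseudo-boolean-func1}), applying the level-based Theorem~\ref{thm:level-based-theorem1}, and to re-use verbatim everything that does not depend on the internal structure of a level. The only place where the decomposition into substrings genuinely changes the analysis is the \textbf{upgrade probability} $s_j$ in condition~(M1): for $\linear$ it sufficed to flip a single $0$-bit, but for $\decomp$ an elementary function $f_\ell$ may require flipping up to $r$ bits simultaneously to switch from $0$ to $1$. This is exactly where the extra factor $n^{2r+2}a_1^{r+1}(1-c)^{-r}$ in the runtime, and the extra factor $r$ in the population-size bound, will come from.

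Concretely, first I would fix the $f$-based partition analogous to the linear case, namely $A_j := \{x \mid \sum_{i=1}^{j} a_i \le \decomp(x) < \sum_{i=1}^{j+1} a_i\}$ with $A_N$ the optimum, so that $m = N+1 \le n+1$ levels. Given $x \in A_j$ with $j<N$, there must be at least one substring $\sigma_\ell$ with $f_\ell(x)=0$ among the $j+1$ highest-weight blocks; since that block has length at most $r$, there is a bit-pattern within $\sigma_\ell$ that sets $f_\ell=1$, reachable by flipping at most $r$ bits and leaving all others fixed. The probability of this specific mutation is at least $(\chi/n)^{r}(1-\chi/n)^{n-r}$. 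Substituting $\chi = (1-c)/(na_1)$ gives $s_j \ge s_* = \Omega\!\left(\left((1-c)/(n^2 a_1)\right)^{r}\right)$, \ie $s_*^{-1} = O\!\left(n^{2r}a_1^{r}(1-c)^{-r}\right)$. Conditions (M2) and (M3) depend only on $p_0=(1-\chi/n)^n$ and on the cumulative selection probability $\beta$, both of which are governed solely by the weights $a_\ell$ and the integrality assumption — not by the block structure — so the identical choices $p_0$, $\gamma_0 := c/4$, and $\delta := c/(4na_1)$ carry over unchanged, exactly as in the proof of Theorem~\ref{thm:GA-on-pseudo-boolean-func}.

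With $s_*$, $\delta$, $\gamma_0$ in hand I would verify (M4'): plugging the new $s_*$ into $\tfrac{8}{\gamma_0\delta^2}\log\!\bigl(\tfrac{Cm}{\delta}(\log\lambda + \tfrac{1}{\gamma_0 s_* \lambda})\bigr)$, the term $1/(\gamma_0 s_* \lambda)$ now carries the factor $n^{2r}a_1^{r}(1-c)^{-r}$ inside the logarithm, which contributes an additive $O(r\log(na_1))$ after taking logs; this is what forces the extra factor $r$ in the stated lower bound $\lambda \ge c'n^2 a_1^2 r \ln(na_1)$, and the upper bound $\lambda = O((na_1)^K)$ keeps $\log\lambda = O(\log(na_1))$. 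Finally, applying the conclusion of Theorem~\ref{thm:level-based-theorem1} yields $\expect{T} = O\!\left(\tfrac{m\lambda\log(\gamma_0\lambda)}{\delta} + \tfrac{1}{\delta}\sum_{j}\tfrac{1}{\gamma_0 s_j}\right)$; the first term becomes $O(n^2 a_1 \lambda \log(na_1))$ as before, while the second term is $O\!\left(\tfrac{1}{\delta}\cdot m \cdot \tfrac{1}{\gamma_0 s_*}\right) = O\!\left(na_1 \cdot n \cdot n^{2r}a_1^{r}(1-c)^{-r}\right) = O\!\left(n^{2r+2}a_1^{r+1}(1-c)^{-r}\right)$, matching the claimed bound.

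The routine parts — copying the (M2)/(M3) verification and the arithmetic collapsing the logarithms — I would compress to ``analogous to the proof of Theorem~\ref{thm:GA-on-pseudo-boolean-func}''. The one step that needs genuine care, and which I expect to be the main obstacle, is the clean justification that a single multi-bit mutation can upgrade any non-optimal $x$: one must argue that flipping \emph{only} the bits inside the deficient block $\sigma_\ell$ (to a satisfying pattern) cannot decrease the contributions of the other, already-satisfied blocks, which holds precisely because the decomposition is \emph{separable} (non-overlapping substrings partitioning $x$). It is this separability — together with the ordering $a_1 \ge \dots \ge a_N$ ensuring the deficient high-weight block lies among the first $j+1$ — that makes the partition $f$-based and (M1) valid; without it the single-mutation-to-a-higher-level argument would break down.
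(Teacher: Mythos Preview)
Your proposal is correct and matches the paper's proof essentially step for step: the same $f$-based partition into $N+1$ levels, the same upgrade probability $s_* = (1-c)^r/(en^{2r}a_1^r)$ obtained by flipping at most $r$ bits inside the deficient block $\sigma_\ell$, the identical choices $p_0=(1-\chi/n)^n$, $\gamma_0=c/4$, $\delta=c/(4na_1)$ carried over verbatim from Theorem~\ref{thm:GA-on-pseudo-boolean-func}, and the same verification of (M4') and application of Theorem~\ref{thm:level-based-theorem1}. Your explicit remark that separability (non-overlapping substrings) is what guarantees that fixing $\sigma_\ell$ cannot spoil the other blocks is a point the paper leaves implicit but is indeed the crux of why (M1) holds.
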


The proof is analogous to that of
Theorem~\ref{thm:GA-on-pseudo-boolean-func1}. In the case of fitness scaling, a result similar to Theorem~\ref{thm:GA-on-scaled_linear}
may be obtained for the \decomp function as well.

One of the well-known examples of the \decomp function is
the Royal Road function~\cite{NCM99}. In what follows, we use the special case of Royal Road functions defined in~\cite{SW04}.
Here all substrings have
equal length $r$ and $N=n/r$:
$$
\royalroad_r(x)
    := \sum_{i=0}^{n/r-1} a_i \prod_{j=1}^{r} x_{ir+j}.
$$
The behavior of Algorithm~\ref{algo:EA} on the fitness functions
$\royalroad_r(x)$ may be considered as a simplified model of population
dynamics in the presence of neutrality regions in a biological
fitness landscape~\cite{NCM99}.
Application of Theorem~\ref{thm:GA-on-adf} implies that in the case of $r=\bigO{1}$
and unit weights~$a_i$, Algorithm~\ref{algo:EA} with
fitness-proportionate selection and bitwise mutation has a
polynomial runtime, given appropriate parameters~$\lambda$
and~$\chi$.

\section{Discussion of Possible Transfer of Results to Biology}\label{sec:discus}

We expect that the results obtained here may be useful not only in development of the theory of evolutionary algorithms, but
also in biological applications, e.g. to estimate the chances for success in directed evolution~\cite{Tizei16}.
Some theoretical bounds from the theory of EAs have been applied in analysis of SELEX procedure for gene promoters in~\cite{ES_BGRS18}, assuming the $(\mu,\lambda)$-selection.
However, more detailed mathematical models of the selection process in SELEX~\cite{ITG91} show that the fraction of any genotype~$i$ in the next generation is proportional to $F_{it}/(Kd_i+C_t(P))$ (\cf Equation (5) in~\cite{ITG91}), where $F_{it}$ is the fraction of genotype~$i$ in the current generation~$t$, $Kd_i$ is the dissociation constant of the genotype~$i,$  and the value~$C_t(P)$ depends on the current population~$P$ and tunable parameters of the SELEX procedure, but it does not depend on~$i$. Therefore, the effect of such process is the same as the expected result of the fitness-proportionate selection on fitness function $1/(Kd_i+C_t(P))$ and the EAs with fitness-proportionate selection is more relevant for SELEX modelling than the EAs with a rank-based selection. The computational experiments~\cite{ES_BGRS18} have shown that the upper bounds from~\cite{bib:Corus2017} are not 
tight for practical applications and the same can be expected in the case of fitness-proportionate selection. Thus further theoretical research is required.

Many models in theoretical biology are based on the assumption that there is almost no epistasis and genes have approximately additive effect on the genotype fitness, which may be modelled by a fitness function \linear or \decomp.
However sometimes it is more appropriate to assume multiplicative effects of genes~\cite{bib:MILLER17}.
The counterparts of such biological models can be found in the EAs
using proportionate selection {\em with scaling,} applied to functions from \linear or \decomp. It might be necessary
to account for some features, ignored in EAs, to make the theory of EAs meaningful for such models
in population biology but the tools developed in the theory of EAs seem to be flexible enough for that purpose.

\section{Conclusions} \label{sec:concl}
The paper extends runtime analysis of fitness-proportionate
selection, from the \onemax function, to the class of linear fitness
functions. Not only does our analysis hold for a larger class of
problems than before, the ranges of parameters involved, such as the mutation rate
and the fitness scaling factor, are also significantly extended. The
improved results follow from the application of the
new level-based theorems and a more detailed analysis of the constants
involved.

On the negative side, we show that non-elitist EAs with fitness-proportionate selection and
standard bitwise mutation, given a constant parameter $\chi > \ln 2$, is inefficient on
the whole class of linear fitness functions. On the positive side, we prove that the runtime can be
turned to polynomial on any linear function with moderate weights by means of reduced mutation rate or by scaling of the fitness. These results are extended to the additively decomposed fitness functions, which 
can be seen 
as simplified fitness landscapes in biology.


\section*{Acknowledgements}
The travel expenses were supported by Ramsay Fund.

\bibliographystyle{splncs03}
\bibliography{references}


\newpage
\appendix
\section*{\appendixname}

The following proofs were omitted from the main part of the paper, and
have been included here for the benefit of the reviewers.

\begin{proof}[of Theorem~\ref{thm:level-based-theorem1}]
We show that conditions (M1-4) imply those of (G1-3) in
Theorem~3.2 from~\cite{bib:DK_GECCO2019}. The proof is analogous to that of Corollary~7 from \cite{bib:Corus2017}.

Let us start with~(G2). Assume that $|P \cap A_{\geq j}| \geq \gamma_0 \lambda$ and
$|P \cap A_{\geq j+1}|\geq \gamma\lambda > 0$ for some $\gamma
\leq \gamma_0$. To create an individual in $A_{\geq j+1}$, it
suffices to pick an $x \in |P \cap A_k|$ for any $k \geq j+1$ and
mutate it to an individual in $A_{\geq k}$, the probability of
such an event, according to (M2) and (M3), is at least $\genbeta(\gamma,P) p_0  \geq (1+\delta)\gamma$. So
(G2) holds.
%

We are given $|P \cap A_{j}| \geq \gamma_0 \lambda$.
Thus, with probability $\genbeta(\gamma_0,P)$, the selection mechanism
chooses an individual $x$ in either $A_j$ or $A_{\geq j+1}$. If
$x\in A_j$, then the mutation operator will by~(M1) upgrade~$x$ to $A_{\geq
j+1}$ with probability $s_j$. If $x\in A_{\geq j+1}$, then by
(M2), the mutation operator leaves the individual in $A_{\geq
j+1}$ with probability $p_0$. So the probability
of producing an individual in $A_{\geq j+1}$ is 
at least
$
\genbeta(\gamma_0,P) \min\{s_j,p_0\}
   \geq \genbeta(\gamma_0,P) s_j p_0
   > \gamma_0 s_j
$
and (G1) holds with $z_j=\gamma_0 s_j,$ $z_*=\gamma_0 s_*$.

Given $z_*=\gamma_0 s_*$,
condition~(M4) yields~(G3).

Conditions (G1--3) are satisfied and Theorem~3.2 from~\cite{bib:DK_GECCO2019} gives
\begin{align*}
  \expect{T}
 =\mathcal{O}
    \left(\frac{m\lambda \log(\gamma_0 \lambda)}{\delta} +
    \frac{1}{\delta}
    \sum_{j=1}^{m-1}\frac{1}{z_j}\right)
 =
\mathcal{O}
    \left(\frac{m\lambda \log(\gamma_0 \lambda)}{\delta} +
    \frac{1}{\delta}
    \sum_{j=1}^{m-1}\frac{1}{\gamma_0 s_j}\right). \
    \qed
\end{align*}
\end{proof}

\begin{proof} [of Lemma~\ref{lemma:antifitness_sum}]
  For the initial population, it follows by a Chernoff bound that
  $\prob{T=1}=e^{-\Omega(n)}$.  We then claim that for all $t\geq 0$,
  $\prob{T=t+1\mid T>t}\leq e^{-c'n}$ for a constant $c'>0,$ which by
  the union bound implies that $\prob{T<e^{cn}}\leq
  e^{cn-c'n}=e^{-\Omega(n)}$ for any constant $c<c'$.

In the initial population, the expected fitness of a $k$-th
individual, $k\in [\lambda]$ is:
$$
E[f(P_0(k))]=\sum_{i=1}^n 0.5 \cdot a_i=0.5 \cdot f^*.
$$
Instead of the fitness values, it will be more convenient here to
consider a deviation from the optimum fitness in individual $j$ of
the current population~$t$. We denote $Z_t^{(j)}:= f^* -
f(P_t(j))$ , for $t\geq
  0,$ $j\in[\lambda],$ and
$Z_t:=\lambda f^* - \sum_{j=1}^{\lambda} f(P_t(j))$. Let $p_j$ be
the probability of selecting
  the $j$-th individual when producing the population in generation
  $t+1$.
  For $f$-monotone selection mechanisms, it holds that
  $\sum_{j=1}^\lambda p_j Z_t^{(j)} \leq Z_t/\lambda.$

Let $P=(x_1,\dots,x_{\lambda})$ be any deterministic population,
and denote the $i$-th bit of the $k$-th individual of $P$ by
$x^{(k,i)}.$ Denote $z_k:= f^* - \sum_{i=1}^{n} a_i x^{(k,i)}$,
$1\le k \le \lambda$, $z(P):=\lambda f^* - \sum_{k=1}^{\lambda}
\linear(x_k)$ and $Z(P):=\lambda f^* - \sum_{j=1}^{\lambda}
f(x_j)$. The expected value of $Z_{t+1}^{(j)}$ for an offspring
$j\in[\lambda]$ is
  \begin{align*}
 \expect{Z_{t+1}^{(j)} \ | \ P_t=P}
 &= f^*- \sum_{i=1}^n \left (a_i x^{(j,i)} (1-\chi/n) + a_i(1-x^{(j,i)}) \chi/n \right)\\
 &= f^*- \sum_{i=1}^n \left(a_i \chi/n + a_i x^{(j,i)} (1-2\chi/n)\right)\\
 &\leq \sum_{k=1}^{\lambda} p_k (f^*- f^* \chi/n) - \sum_{k=1}^{\lambda} p_k (f^*- z_k)(1-2\chi/n)\\
 &= f^*-f^*\chi/n-f^*(1-2\chi/n)+ (1-2\chi/n) \sum_{k=1}^{\lambda} p_k z_k\\
 &\le  f^* \chi/n + (1-2\chi/n)Z(P)/\lambda.
  \end{align*}

If $T>t$ and $Z(P)<\lambda
f^*(1+\varepsilon)/2$, then
  \begin{align*}
    \expect{Z_{t+1}\mid P_t=P}
     &\le \lambda \chi f^* /n + Z(P)\left(1-2\chi/n\right)\\
     & < \lambda\chi f^* /n + \frac{\lambda f^*}{2}(1+\varepsilon)\left(1-2\chi/n\right)
       = \frac{\lambda f^*}{2}(1+\varepsilon)-\varepsilon\lambda\chi f^*/n.
  \end{align*}
  Now $Z_{t+1}^{(1)}, Z_{t+1}^{(2)}, \dots,
  Z_{t+1}^{(\lambda)}$ are non-negative independent random variables,
  each bounded from above by~$f^*$,
  so using the Hoeffding's inequality~\cite{Hoeffding63}
  we obtain
  \begin{align*}
    \prob{ Z_{t+1} \geq \frac{\lambda f^*}{2}(1+\varepsilon)}
   &  \leq   \prob{ Z_{t+1} \geq \expect{Z_{t+1}} + \varepsilon\lambda\chi f^*/n}\\
   & \leq \exp\left( - \frac{2(\varepsilon\lambda\chi
   f^*/n)^2}{\lambda\cdot
       (f^*)^2}\right) = e^{-\Omega(n^{\delta})}. \ \ \qed
  \end{align*}
\end{proof}

\begin{proof}[of Theorem~\ref{thm:GA-on-scaled_linear}]
We apply Theorem~\ref{thm:level-based-theorem} and use
the same partition as the one in the proof of
Theorem~\ref{thm:GA-on-pseudo-boolean-func},
thus the number of levels is also $m = n+1$.

To estimate $s_j$ in (M1), we also consider the probability
of flipping a specific $0$ while keeping the rest of the
string unchanged
$\frac{\chi}{n}\left(1 - \frac{\chi}{n}\right)^{n-1}
  >    \frac{\chi}{n}\left(1 - \frac{1}{n}\right)^{n-1}
  \geq \frac{\chi}{en} =: s_j = s_*
$, and this choice of $s_j$ satisfies~(M1).
%
To satisfy (M2), we pick $p_0 := (1-\chi/n)^n$, \ie the probability
of not flipping any bit position by mutation.

In (M3), we choose $\gamma_0 := \varepsilon/c,$
and since both $\chi$
and $c$ are constants with $c>e^\chi$, so are $\varepsilon$
and $\gamma_0$ with $\gamma_0 \in (0,1)$.
We denote the fitness level of
$\lceil\gamma\lambda\rceil$-ranked individual of any given population
$P$ by~$f_j$. Let $k\ge \lceil
\gamma\lambda\rceil$ be the number of individuals with fitness
 at least $f_j$, and let $s\ge kf_j \ge \lceil
\gamma\lambda\rceil f_j$ be the sum of fitnesses of these $k$
individuals. All the remaining individuals satisfy the inequality
$f(x,c)\le f_j/c$ because $f(x)$ takes only integer values in case
of integer weights~$a_j$. For any $\gamma \leq \gamma_0 =
\varepsilon/c$, the probability of selecting one of the
$k$ individuals in the case of exponential fitness scaling is
\begin{equation*}
\beta(\gamma, P)\ge \frac{s}{(\lambda-k)f_j/c+s}\ge
 \frac{\gamma}{(1-\frac{k}{\lambda})/c+\gamma}\ge
  \frac{\gamma}{(1-\frac{1}{\lambda})/c+\gamma}\ge
  \frac{\gamma c}{1+\gamma c}\ge
  \frac{\gamma c}{1+\varepsilon}.
\end{equation*}
Now note that the lower bound for~$p_0:=(1-\chi/n)^{n}$
as seen in the proof of Theorem~\ref{thm:GA-on-pseudo-boolean-func}
also implies that $p_0\ge
e^{-\chi}/(1+\varepsilon)$ for any constant $\varepsilon>0$
and sufficiently large~$n$. Therefore,
$$
\beta(\gamma,P)p_0
  \ge \frac{\gamma c}{(1+\varepsilon)^2 e^{\chi}}
  =   \gamma(1+\varepsilon)
$$
where the last equality uses $c = (1 + \varepsilon)^3
e^\chi$ from the choice of $\varepsilon$. We have just shown
that (M3) is satisfied for a constant $\delta := \varepsilon$.

Condition~(M4) requires a population size of at least
$\frac{4}{\gamma_0 \delta^2}
\ln \left( \frac{128 m}{\gamma_0 s_* \delta^2 } \right).$
The condition then
holds for any $\lambda \geq \frac{4c}{\varepsilon^3}
\ln \left( \frac{128 (n+1)^2ce}{\varepsilon^3\chi} \right)$.
Since all the conditions are satisfied, application of
Theorem~\ref{thm:level-based-theorem}
implies
\begin{align*}
\expect{T}
    &\leq \frac{8}{\delta^2} \sum_{j=1}^{n} \left(\lambda \ln(3\delta\lambda/2) + \frac{n e}{\gamma_0 \chi}\right)
     =\frac{8}{\varepsilon^2} \sum_{j=1}^{n} \left(\lambda \ln(3\varepsilon\lambda/2) + \frac{n e c}{\varepsilon \chi}\right).
\end{align*}\qed
\end{proof}

\begin{proof}[of Theorem~\ref{thm:GA-on-adf}]
The proof applies Theorem~\ref{thm:level-based-theorem1} with
the partition of the search space into $m=N+1$ levels:
$A_N:=\{x \mid \decomp(x) = \sum_{\ell=1}^N a_\ell =: f^*\}$,
$A_{j}
  := \left\{x \mid \sum_{i=1}^j a_i \leq \decomp(x) < \sum_{i=1}^{j+1} a_i \right\}$
for $j \in \{0\} \cup [N-1]$. For any solution $x$, we say that
it has subsequence $\sigma_\ell$ solved if $f_\ell(x) = 1$.

The partition is such that given $x\in A_{j}$ for any~$j<N$,
among the first $j+1$ subsequences, there must be at least one
subsequence $\sigma_\ell$ that is not solved, thus it suffices
to solve this subsequence while keeping the remaining part of
the string unchanged to produce a search point at a higher
level. The probability of such an event is at least
$ \left(\frac{\chi}{n}\right)^{|\sigma_\ell|}\left(1 - \frac{\chi}{n}\right)^{n-|\sigma_\ell|}
  \geq \left(\frac{1 - c}{n^2 a_1}\right)^r \left(1 - \frac{1}{n}\right)^{n-1}
  \geq \frac{(1-c)^r}{en^{2r} a_1^r} =: s_j = s_*
$, and this choice of $s_j$ satisfies~(M1).
%
To satisfy (M2), we pick $p_0 := (1-\chi/n)^n$, \ie the probability
of not flipping any bit position by mutation.

In (M3), we choose $\gamma_0 := c/4$ and for any $\gamma \leq
\gamma_0$, let $f_\gamma$ be the fitness of the $\lceil\gamma
\lambda\rceil$-ranked individual of any given $P \in
\mathcal{X}^\lambda$. Thus there are at least $k \geq
\lceil\gamma\lambda\rceil \geq \gamma\lambda$ individuals with
fitness at least $f_\gamma$ and let $s \geq k f_\gamma \geq
\gamma\lambda f_\gamma$ be their sum of fitness. Since the
weights~$a_i$ are all integers, we can pessimistically assume that
individuals with fitness less than $f_\gamma$ have fitness
$f_\gamma - 1$, therefore 
\begin{align*}
  \beta(\gamma, P)
    &\geq \frac{s}{s + (\lambda - k)(f_\gamma - 1)}
     \geq \frac{s}{s + (\lambda - \gamma\lambda)(f_\gamma - 1)} \\ 
    &\geq \frac{\gamma\lambda f_\gamma}{\gamma\lambda f_\gamma + (\lambda - \gamma\lambda)(f_\gamma - 1)}
     =    \frac{\gamma}{1 - (1 - \gamma)/f_\gamma} \\ 
    &\geq \frac{\gamma}{1 - (1 - c/4)/f^*}
     \geq \gamma e^{(1 - c/4)/f^*}, 
\end{align*}
where in the last line we apply
the inequality $e^{-x}\ge 1-x.$
We reuse the argument from the proof of
Theorem~\ref{thm:GA-on-pseudo-boolean-func}
to bound $p_0$ from below by $e^{-\chi/(1-\varepsilon)}$
for any constant $\varepsilon \in (0,1)$ and sufficiently large
$n$, and
conclude that
\begin{equation*} 
  \beta(\gamma, P) p_0
    \ge \gamma e^{(1 - c/4)/f^*} e^{-\chi/(1 - \varepsilon)}
    \ge \gamma\left(1+\frac{1 - c/4 -\chi f^*/(1-\varepsilon)}{f^*}\right).
\end{equation*}
Since $\chi f^* \leq \chi N a_1 \leq \chi n a_1 = 1 - c$, choosing
$\varepsilon := 1 - \frac{1-c}{1-c/2} \in (0,1)$
implies $\chi f^*/(1 - \varepsilon) \leq 1 - c/2$. Condition (M3)
then holds for $\delta:=c/(4 n a_1)$ because
\begin{equation*} 
\beta(\gamma, P) p_0
  \geq \gamma\left(1+\frac{1 - c/4 - (1 - c/2)}{f^*}\right)
  \geq \gamma\left(1+\frac{c}{4 n a_1}\right).
\end{equation*}

To verify condition~(M4'), we assume $C=1$ and note that
$$
               \frac{8}{\gamma_0\delta^2} \log\left(\frac{C m}{\delta} \left(\log \lambda +\frac{1}{\gamma_0 s_* \lambda}\right) \right)
$$
$$
={\cal O}\left(n^2 a_1^2 \log\left({\cal O}(n^2 a_1) \left(\log(n^K) +\frac{e n^{2r} a_1^r}{(c/4)(1-c)^r\lambda} \right)\right)\right)
$$
$$
={\cal O}(n^2 a_1^2 r\log(n a_1)),
$$
so~(M4) holds if $c'$ is large enough.

By Theorem~\ref{thm:level-based-theorem},
$\expect{T}=\bigO{n^2 a_1 \lambda \log \lambda + n^{2r+2} a_1^{r+1}/(1-c)^r}=
\bigO{n^2 a_1 \lambda \log(n a_1) + n^{2r+2} a_1^{r+1}(1-c)^{-r}}$.\qed
\end{proof}

\end{document}